\newenvironment{ntp}{%
  
  \begin{abstract}%
}{%
  \end{abstract}%
}
\DeclareMathSymbol{\Tau}{\mathalpha}{operators}{84}
\newtheorem{theorem}{Theorem}
\newtheorem{definition}{Definition}
\begin{document}

\title{CTS-CBS: A New Approach for Multi-Agent Collaborative Task Sequencing and Path Finding}

\author{Junkai~Jiang$^{1}$, Ruochen~Li$^{1}$, Yibin~Yang$^{1}$, Yihe~Chen$^{1}$, Yuning~Wang$^{1}$, Shaobing~Xu$^{1\ast}$, and Jianqiang~Wang$^{1\ast}$
        % <-this % stops a space
% \thanks{Manuscript received April 19, 2021; revised August 16, 2021.}
\thanks{This research was funded by National Natural Science Foundation of China, Science Fund for Creative Research Groups (Grant No. 52221005) and National Natural Science Foundation of China (Grant No. 52131201).}% <-this % stops a space
\thanks{$^{1}$Junkai~Jiang, Ruochen~Li, Yibin~Yang, Yihe~Chen, Yuning~Wang, Shaobing~Xu and Jianqiang~Wang are with the School of Vehicle and Mobility, Tsinghua University, Beijing, China.}
\thanks{$^{\ast}$Corresponding author: Shaobing~Xu (shaobxu@tsinghua.edu.cn) and Jianqiang~Wang (wjqlws@tsinghua.edu.cn).}
}

\maketitle

% astract
\begin{abstract} 
This paper addresses a generalization problem of Multi-Agent Pathfinding (MAPF), called Collaborative Task Sequencing - Multi-Agent Pathfinding (CTS-MAPF), where agents must plan collision-free paths and visit a series of intermediate task locations in a specific order before reaching their final destinations. To address this problem, we propose a new approach, Collaborative Task Sequencing - Conflict-Based Search (CTS-CBS), which conducts a two-level search. In the high level, it generates a search forest, where each tree corresponds to a joint task sequence derived from the jTSP solution. In the low level, CTS-CBS performs constrained single-agent path planning to generate paths for each agent while adhering to high-level constraints. We also provide theoretical guarantees of its completeness and optimality (or sub-optimality with a bounded parameter). To evaluate the performance of CTS-CBS, we create two datasets, \texttt{CTS-MAPF} and \texttt{MG-MAPF}, and conduct comprehensive experiments. The results show that CTS-CBS adaptations for MG-MAPF outperform baseline algorithms in terms of success rate (up to 20 times larger) and runtime (up to 100 times faster), with less than a 10\% sacrifice in solution quality. Furthermore, CTS-CBS offers flexibility by allowing users to adjust the sub-optimality bound $\omega$ to balance between solution quality and efficiency. Finally, practical robot tests demonstrate the algorithm's applicability in real-world scenarios.
\end{abstract}

\begin{ntp}
This paper presents CTS-CBS, an optimization algorithm designed to solve the Collaborative Task Sequencing Multi-Agent Pathfinding (CTS-MAPF) problem. The proposed method significantly improves both success rate and efficiency compared to traditional algorithms. Practitioners working in autonomous robotics or warehouse automation could benefit from this approach to enhance task sequencing and path planning in systems requiring coordination among multiple agents. The algorithm can be integrated with existing multi-robot systems with minimal changes to the infrastructure. Future work includes exploring additional strategies to further enhance the efficiency of CTS-CBS, considering the kinematic constraints of agents to broaden its application scope, as well as extending the algorithm to dynamic and lifelong scenarios.
\end{ntp}

\begin{IEEEkeywords}
Multi-Agent Path Finding, Multi-Agent Collaborative Task Sequencing, Integrated Task Sequencing and Path Planning, Conflict-Based Search
\end{IEEEkeywords}

%-------------------------------------------------------------------------------------------------
%-------------------------------------------------------------------------------------------------
%-------------------------------------------------------------------------------------------------

\section{Introduction} \label{Sec:Intro}
\IEEEPARstart{M}{ulti-agent} path finding (MAPF) seeks to generate conflict-free paths for multiple agents in a shared environment. Each agent needs to navigate from the start to its designated goal without colliding with others \cite{stern2019multi}. MAPF is widely applicable across various fields, including robotics \cite{ma2017lifelong}, warehouses \cite{honig2019persistent} and autonomous vehicles \cite{wen2022cl}. In this paper, we explore a generalization of MAPF, referred to as Collaborative Task Sequencing – Multi-Agent Path Finding (CTS-MAPF), where agents are required to visit a series of intermediate task locations before reaching their destinations. In CTS-MAPF, agents must adhere to collision-free constraints, while determining the optimal sequence for visiting these task locations in order to minimize the overall cost of completing the entire operation, such as the total time or total distance. Fig.~\ref{fig:ctscbs_example} presents a simple example of CTS-MAPF.

\begin{figure}[b]
    \centering
    \includegraphics[width=0.45\linewidth]{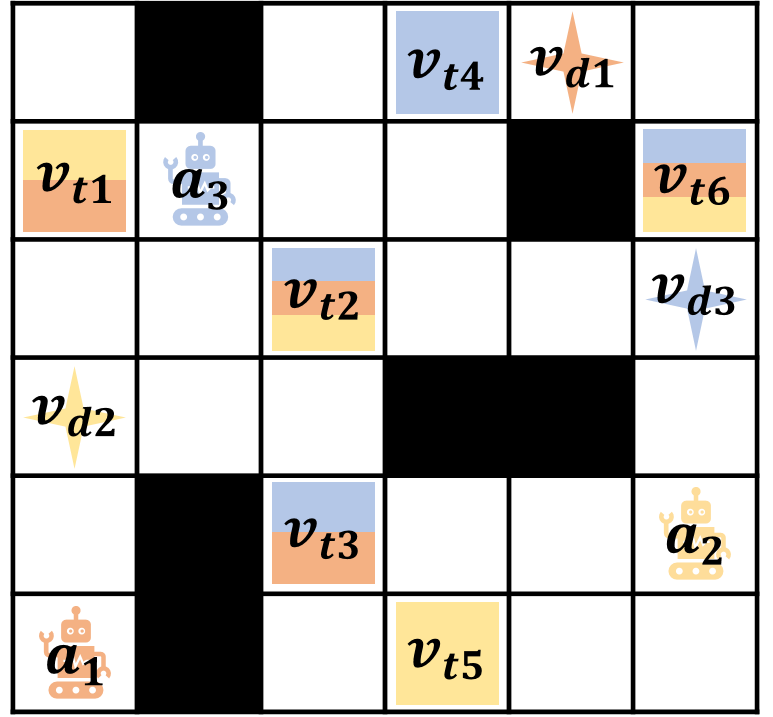} 
    % \vspace{-3mm}
    \caption{A simple example of CTS-MAPF. There are three agents ($a_i, i=1,2,3$) at their initial locations; their respective goal locations are denoted as (($v_di, i=1,2,3$) and the task locations are denoted as $v_ti$. The colors marked on the task locations indicate the specific agents required to reach there. For example, $v_t1$ requires agents $a_1$ and $a_2$ to reach, while $v_t2$ requires all three agents to reach it. For task locations that require multiple agents, there is no specified order in which the agents must obey. The agents need to plan collision-free paths to visit their respective task locations and ultimately return to their destinations.}
    \label{fig:ctscbs_example}
    \vspace{-3mm}
\end{figure}

Compared to MAPF, CTS-MAPF also has a wide range of applications but is relatively more specific. For instance, in a hazardous monitoring scenario, different physical information (e.g., temperature, humidity, electromagnetic intensity, concentrations of harmful substances) need to be measured at several fixed monitoring locations. Each mobile robot is equipped with different sensors, so they must arrive at the monitoring locations that require their specific sensors to collect information, and eventually reach their respective destinations. Another application example is an unmanned market, where each shelf holds a specific category of goods, and each mobile robot is responsible for picking items for a single order, which may require a combination of various goods. Each mobile robot needs to plan its path to visit different shelves to collect items, and finally reach the goal location, all while ensuring that all paths are collision-free. Fig.~\ref{fig:applications} (a) and (b) illustrate the scenarios of the aforementioned two applications respectively.

Therefore, the essence of CTS-MAPF lies in planning both the task sequence and the collision-free path for each robot, making it an \emph{integrated task sequencing and path planning} problem. Since the task sequence of one robot may be influenced by the planning results of others, the task sequencing process is \emph{collaborative}.

Solving CTS-MAPF is challenging, especially when completeness or optimality guarantees are required, as it necessitates addressing the problem in both MAPF and task sequencing. If the order of tasks is predefined, meaning each robot visits a series of target locations (including the goal) in a specified sequence, then CTS-MAPF reduces to MAPF or its straightforward extension. However, MAPF itself is NP-hard. If collisions between robots are ignored, CTS-MAPF reduces to a combination of several single-agent TSPs, called joint TSP (jTSP) in this paper. However, the single-agent TSP is also NP-hard. Therefore, solving CTS-MAPF is extremely difficult, as it requires tackling the challenges of both MAPF and TSP, and effectively integrating their solving techniques.

This work aims to formalize and study CTS-MAPF, where robots need to achieve collaborative task sequencing while planning collision-free paths, and minimizing the \emph{flowtime} (the sum of the finish times of all agents at the last goal locations of their assigned tasks). We seek to develop a new approach capable of finding the optimal solution for CTS-MAPF or a bounded suboptimal one to balance optimality and efficiency.

\begin{figure}[htbp]
  \centering
  % 第一行两张图片
  \begin{minipage}{0.48\linewidth}
    \centering
    \includegraphics[width=0.9\linewidth]{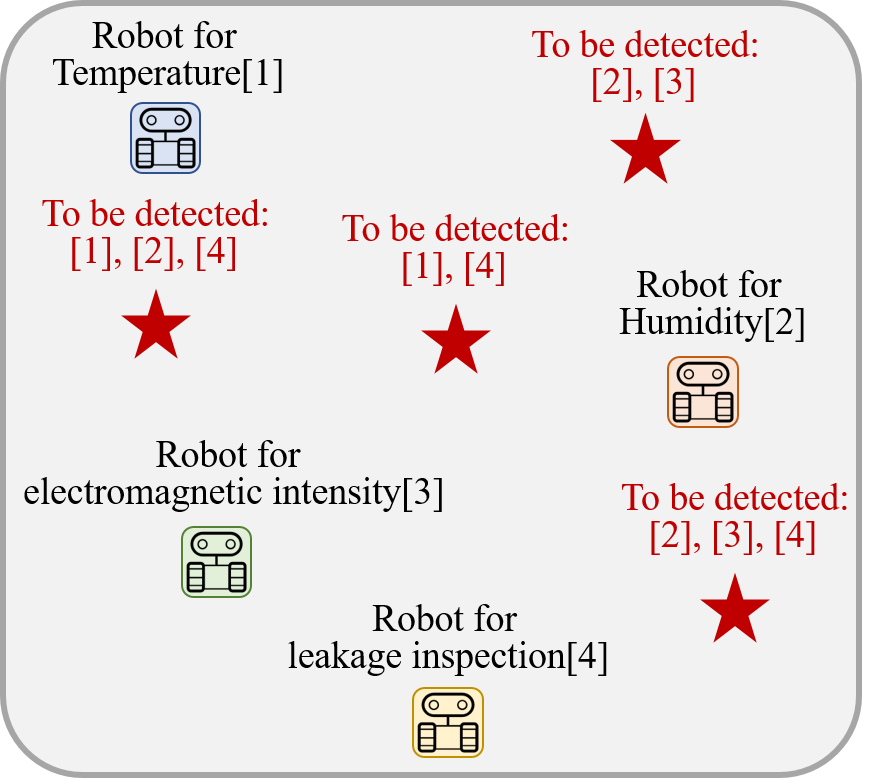} 
    \centerline{(a)}
  \end{minipage}
  \begin{minipage}{0.48\linewidth}
    \centering
    \includegraphics[width=0.9\linewidth]{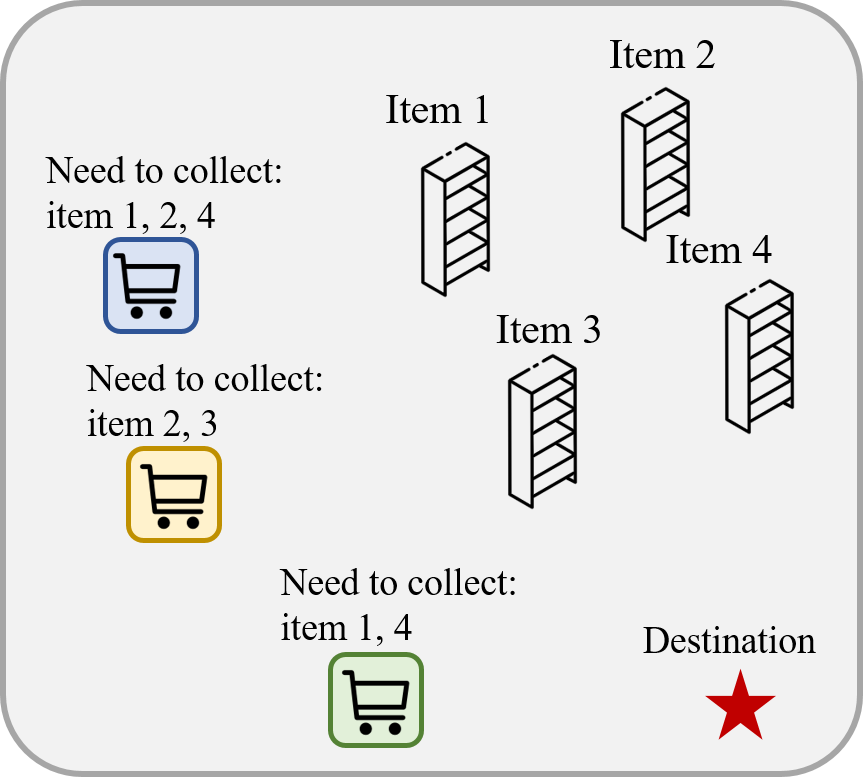}
    \centerline{(b)}
  \end{minipage}
  \caption{The possible applicable scenarios of CTS-MAPF. (a) for hazardous monitoring and (b) for unmanned market.}
  \label{fig:applications}
  \vspace{-3mm}
\end{figure}

\subsection{Related Work}
% MAPF, Task Sequencing, Integrated task sequencing and path planning 
\subsubsection{MAPF}
MAPF seeks to find collision-free paths for all agents, and minimize the total travel times. To solve the MAPF problem efficiently and effectively, researchers have made significant efforts. MAPF algorithms can generally be divided into two categories: suboptimal solvers and optimal solvers. Typical methods for the former include PUSH AND SWAP \cite{luna2011push}, Hierarchical Cooperative A* \cite{silver2005cooperative} and Prioritized Planning \cite{vcap2015prioritized}, which are usually easy to deploy but are non-complete and non-optimal. On the other hand, optimal solvers are capable of solving MAPF optimally, with methods including Independence Detection \cite{standley2010finding}, M* \cite{wagner2011m}, Constraints Programming \cite{bartak2017modeling}, and Conflict-Based Search (CBS) \cite{sharon2015conflict}. Among these, CBS has attracted a lot of attention in recent years due to its effective performance. It executes a bi-level search, where the high level runs A* on the constraint tree to resolve collisions and the low level runs A* to plan paths for single agents. CBS has been widely studied and extended (e.g., ECBS \cite{barer2014suboptimal}, ICBS \cite{boyarski2015icbs}, safer-CBS \cite{liu2024safer}). One significant direction of extension is exploring bounded suboptimal solvers for MAPF to balance the trade-off between solving efficiency and optimality. Since the MAPF problem definition does not include intermediate task locations, these methods cannot be directly applied to solving the CTS-MAPF problem.

\subsubsection{TSP}
In CTS-MAPF, agents need to determine the sequence of task locations, which can be modeled as a TSP. The TSP can be formulated as an integer linear program \cite{papadimitriou2013combinatorial}. 

TSP is a classic NP-hard problem. Its solving methods can be broadly categorized into three types: exact algorithms (such as dynamic programming \cite{bellman1962dynamic} and branch-and-bound methods \cite{padberg1991branch}), approximation algorithms (such as greedy algorithms \cite{johnson1997traveling} and minimum spanning tree-based algorithms \cite{lawler1985traveling}), and heuristic algorithms (such as genetic algorithms \cite{ray2007genetic} and ant colony optimization \cite{dorigo1997ant}). Additionally, there are some well-established solvers, such as GUROBI and LKH, that can solve TSP problems by appropriate methods based on user requirements. In CTS-MAPF, it is necessary to solve multiple instances of TSP to obtain the optimal task sequencing results. Therefore, TSP can be considered as the upper-level problem of CTS-MAPF, which will be detailed in Section \ref{Sec:CTSCBS}.

\subsubsection{MA-ITSPP}
The multi-agent integrated task sequencing and path planning (MA-ITSPP) problem has received extensive investigation recently. Most studies consider it an extension of MAPF and explore solving methods based on CBS. For instance, CBS-TA (task assignment) was proposed in \cite{honig2018conflict}, where the authors investigated the integration of TA with CBS to yield optimal solutions. Despite the different configurations, the concepts of $K$-best solutions and search forest presented in the paper are particularly noteworthy. The definition of MG (multi goal)-MAPF is similar to CTS-MAPF, except that agents do not have specified destinations. \cite{surynek2021multi} introduced the Hamiltonian CBS algorithm to address the MG-MAPF problem. Multi-goal task assignment and path finding (MG-TAPF) is another related issue \cite{zhong2022optimal}, but it specifies the sequence of tasks for each group in the problem definition. \cite{ren2023cbss} explored the Multi-Agent Combinatorial Path Finding problem, where each task can be completed by any of the multiple agents. To sum up, there is currently no direct research addressing the CTS-MAPF problem, and even similar problems (such as MG-MAPF) lack solving methods that balance quality and efficiency.

\subsection{Motivation and Contribution}
Existing methods explore limited aspects of CTS-MAPF problems. In this paper, we propose a new approach, Collaborative Task Sequencing – Conflict-Based Search (CTS-CBS), tailored for the general case of CTS-MAPF. Specifically, by integrating algorithms of TSP and MAPF, CTS-CBS follows a best-first manner to ensure the optimality of the solution. Similar to other CBS-based algorithms, CTS-CBS conducts a two-level search. In the high level, a search forest is generated, with each search tree corresponding to a joint task sequence determined by the solution of the jTSP. One contribution of this paper is providing the $K$-best solutions for the jTSP, as the root node of the $K$-th search tree in the search forest must match the $K$-th best solution (detailed in Section \ref{Sec:CTSCBS}). In the low level, CTS-CBS runs constrained single-agent path planning to generate paths for each agent while satisfying the constraints determined in the high level.

CTS-CBS is proved to be both complete and optimal (Section \ref{Sec:TheoAnal}). Additionally, CTS-CBS can also be bounded suboptimal if a boundary parameter is given. With a small boundary parameter, CTS-CBS can compute a near-optimal solution (and it is exactly optimal when the parameter is 0), though the computational burden is higher. Conversely, with a larger boundary parameter, CTS-CBS tends to produce a feasible solution more efficiently. Furthermore, If the boundary parameter is infinite, CTS-CBS degenerates into a sequential method, which first computes the optimal joint task sequence (without considering inter-agent conflicts) and then calculates collision-free paths for this sequence. Under this condition, the algorithm loses both completeness and optimality guarantees.

To sum up, the main contributions of this paper include:
\begin{enumerate}

\item CTS-CBS, a new approach to solve CTS-MAPF problems with optimality (or bounded suboptimality) and completeness guarantees.

\item A new method to compute $K$-best solutions for the jTSP.

\item Three adaptations of CTS-CBS for MG-MAPF problems which outperform all baseline algorithms.

\item Practical robot tests that verify the applicability of the algorithm in real-world robotic scenarios.

\end{enumerate}

The rest of this paper is organized as follows. In Section \ref{Sec:prelim}, we describe the CTS-MAPF problem and review the CBS algorithm. We then introduce the CTS-CBS approach in Section \ref{Sec:CTSCBS} and analyze it theoretically in Section \ref{Sec:TheoAnal}. The experiments are presented in Section \ref{Sec:Experiments} and finally, Section \ref{Sec:Conc} concludes this paper and provides directions for future work.

\section{Preliminaries} \label{Sec:prelim}
\subsection{Problem Definition}

The CTS-MAPF problem can be defined by a nine element tuple $\langle G=(V,E), A, \mathcal{O}, V_s, V_d, V_t, f, \mathcal{\Tau}, \Pi \rangle$. $A = \{a^1, a^2, \ldots, a^N\}$ is a set of $N$ agents, moving in a workspace denoted as an undirected graph $G=(V,E)$, where $V$ is the set of vertices (locations) and $E \subseteq V \times V$ is the set of edges connecting vertices. For any two vertices $v_1, v_2 \in V$, the cost (representing the minimal traveling time) of the edge connecting them can be denoted as $\text{cost}(v_1, v_2)$, which is a positive number. In this paper, we use $[N]$ to denote the set $\{1,2,\ldots,N\}$ and superscript $i$ to represent the variables related to agent $a^i$. Some static obstacles are randomly distributed in the environment, occupying the workspace $\mathcal{O}$.

$V_s$ and $V_d$ are the set of starts and destinations, and $v_s^i$ and $v_d^i$ represent the corresponding variables of agent $a^i$. $V_t$ is the set of task positions (for simplicity, we will use the term \textit{tasks} in the following content). There are $M$ tasks in total, denoted as $v_{t1}, v_{t2}, \ldots, v_{tM}$, and each task may require some agents to visit. Let $f$ denote the function from $V_t$ to $A$. That is, for any $v_t \in V_t$, $f(v_t)$ is a subset of $A$, representing the set of agents that need to visit $v_t$.

As mentioned earlier, in the CTS-MAPF problem, it is necessary to simultaneously solve the task sequencing and path planning problems. Let $\tau^i$ represent the target sequence of agent $a^i$ (including the start $v_s^i$, destination $v_d^i$ and all the $v_t$ where $a^i \in f(v_t)$). Specifically, $\tau^i = \{v_s^i, u_1^i, \ldots, u_k^i, v_d^i\}$, where $u_j^i$ represents the $j$-th task visited by agent $a^i$. A path $\pi^i$ is considered to follow $\tau^i$ if agent $a^i$ can visit all the tasks in the same order as in $\tau^i$ along $\pi^i$. We use $\mathcal{\Tau} = \{\tau^i \mid i \in [N]\}$ and $\Pi = \{\pi^i \mid i \in [N]\}$ to represent the joint task sequence and joint path for all the agents, respectively.

Now we introduce another important concept in the CTS-MAPF problem: cost calculation. The overall objective is to minimize the \emph{flowtime}, which means minimizing the cost of the joint path $\Pi$ while satisfying the constraints. The cost of $\Pi$ is defined as $\text{cost}(\Pi) = \sum \text{cost}(\pi^i)$. It is important to note that $\pi^i$ may include wait actions to avoid conflicts, and the cost resulting from these actions must also be included in $\text{cost}(\pi^i)$. Correspondingly, $\mathcal{\Tau}$ also has a cost defined as $\text{cost}(\mathcal{\Tau}) = \sum \text{cost}(\tau^i)$. However, the calculation of $\text{cost}(\tau^i)$ ignores all conflicts and represents the sum of the minimum costs between all adjacent vertices in $\tau^i$, i.e., $\text{cost}(\tau^i) = \text{cost}(v_s^i, u_1^i) + \sum \text{cost}(u_j^i, u_{j+1}^i) + \text{cost}(u_k^i, v_d^i)$. The concept of $\text{cost}(\mathcal{\Tau})$ will be used in the CTS-CBS algorithm.

If the CTS-MAPF is viewed as an optimization problem, several constraints need to be satisfied as follows:
\begin{enumerate}
    \item \textbf{Boundary Constraints}: For each agent $a^i$, it needs to start from $v_s^i$ and end at $v_d^i$.
    \item \textbf{Task Completion Constraints}: For each task $v_t$, all agents in $f(v_t)$ need to visit $v_t$ at least once.
    \item \textbf{Agent Behavior Constraints}: Each agent may have up to 5 possible actions: waiting or moving one step in one of the four cardinal directions.
    \item \textbf{Static Collision Constraints}: All agents must not collide with static obstacles in the environment.
    \item \textbf{Inter-Agent Collision Constraints}: Agents must not collide with each other. The inter-agent collisions (conflicts) includes two types: vertex conflicts and edge conflicts, as defined in MAPF.
\end{enumerate}

Considering the summarized elements, the CTS-MAPF problem can be formulated as

\begin{equation}\label{eq:ocp}
\begin{aligned}
    \min & \quad \quad \text{cost}(\Pi)\\
    \textrm{s.t.}& \quad  \textrm{Boundary Constraints},\\
  & \quad \textrm{Task Completion Constraints},\\
  & \quad \textrm{Agent Behavior Constraints},\\
  & \quad \textrm{Static Collision Constraints}, \\
  & \quad \textrm{Inter-Agent Collision Constraints}. 
\end{aligned}
\end{equation}

\subsection{Review of CBS}
CBS, which is an optimal and complete algorithm for MAPF problem, operates in a bi-level manner. The high-level search employs a binary search tree structure, called constraint tree ($CT$), whose nodes contain a tuple of constraints, solution, and cost, denoted as $(\Omega, \Pi, g)$. $\Omega$ is a set of constraints, each element prevents a specific agent from occupying a particular vertex or traversing a particular edge at a given time. $\Pi$ is the joint path that connects the starts and destinations for all the agents, and $g$ is the cost of $\Pi$, i.e., $g = g(\Pi) = \text{cost}(\Pi)$.

For each node within $CT$, a low-level search is executed, whose purpose is to determine the shortest path for each agent while adhering to the constraints in $\Omega$. Typically, the A*-like algorithm is employed for this purpose. Once paths for all agents are computed in a node, CBS checks for conflicts. Upon detecting a conflict, CBS resolves it by creating two new child nodes in $CT$. Each child node introduces an additional constraint to $\Omega$ aimed at resolving the conflict by restricting one of the conflicting agents. This process of conflict resolution continues until a set of conflict-free paths (i.e., a solution) is found.

CBS guarantees optimality through a best-first mechanism. Nodes in $CT$ are managed using a priority queue based on the cost $g$. CBS starts constructing $CT$ from the root node $P_{\text{root}}(\emptyset, \Pi_0, g(\Pi_0))$, where $\Pi_0$ is the joint path obtained by running the low-level search independently for all agents without considering inter-agent conflicts. In each iteration, the node with the lowest $g$-value, denoted as $P_f(\Omega_f, \Pi_f, g(\Pi_f))$ (the subscript $f$ stands for 'father'), is extracted from the priority queue for expansion. Then, every pair of paths in $\Pi_f$ is examined for conflicts. If there are no conflicts, then $\Pi_f$ is the solution, which is guaranteed to be optimal. Otherwise, if a conflict $(a^i, a^j, v, t)$ or $(a^i, a^j, e, t)$ is detected, it is split into two constraints: $(a^i, v, t)$ and $(a^j, v, t)$ or $(a^i, e, t)$ and $(a^j, e, t)$. Then two new nodes $P_{c1}(\Omega_{c1}, \Pi_{c1}, g(\Pi_{c1}))$ and $P_{c2}(\Omega_{c2}, \Pi_{c2}, g(\Pi_{c2}))$ (the subscript $c$ stands for 'child') are generated, where $\Omega_{c1} = \Omega_f \cup \{(a^i, v, t) \lor (a^i, e, t)\}$ and $\Omega_{c2} = \Omega_{f} \cup \{(a^j, v, t) \lor (a^j, e, t)\}$. $\Pi_{c1}$ and $\Pi_{c2}$ are the joint paths obtained by rerunning the low-level search under the new constraint sets, and $g(\Pi_{c1})$ and $g(\Pi_{c2})$ are their corresponding costs. The new child nodes are then inserted back into the priority queue based on their $g$-value for the next iteration.

\section{Collaborative Task Sequencing - Conflict-Based Search} \label{Sec:CTSCBS}

In this section, we first describe the overall process of the CTS-CBS algorithm. Then, we explain two key components that ensure the algorithm’s bounded suboptimality: $K$-best joint task sequencing and necessity checking for new root nodes. Finally, we discuss some implementation details to improve the algorithm’s efficiency.

\subsection{CTS-CBS Algorithm}

CTS-CBS also follows a bi-level search architecture. We visualize the main process of CTS-CBS in Fig.~\ref{fig:ctscbs}. Unlike CBS, in the high-level search, CTS-CBS generates a search forest, which contains multiple search constraint trees ($CT$s). Each $CT$ corresponds to a joint task sequence $\mathcal{\Tau}$, meaning that every node in the same $CT$ has joint paths $\Pi$ that follow the corresponding $\mathcal{\Tau}$. As previously mentioned, the joint task sequence $\mathcal{\Tau}_j^*$ corresponds to the $K$-th best solution of the jTSP problem, so the costs of the joint task sequences $\{\mathcal{\Tau}_j^*, j = 1, 2, \ldots\}$ are non-decreasing, i.e., $\text{cost}(\mathcal{\Tau}_1^*) \leq \text{cost}(\mathcal{\Tau}_2^*) \leq \text{cost}(\mathcal{\Tau}_3^*) \leq \ldots$. Within a single $CT$, the purpose of node expansion is to eliminate inter-agent collisions. Node expansion between $CT$s, however, needs to follow certain rules introduced later. In the low-level search of a node $P(\Omega, \Pi, g)$ in $CT_j$, an A*-like algorithm is employed to find the optimal joint path $\Pi$ for agents, that follow $\mathcal{\Tau}_j^*$ while satisfying the constraints $\Omega$.

\begin{figure}[htpb]
    \centering
    \includegraphics[width=0.95\linewidth]{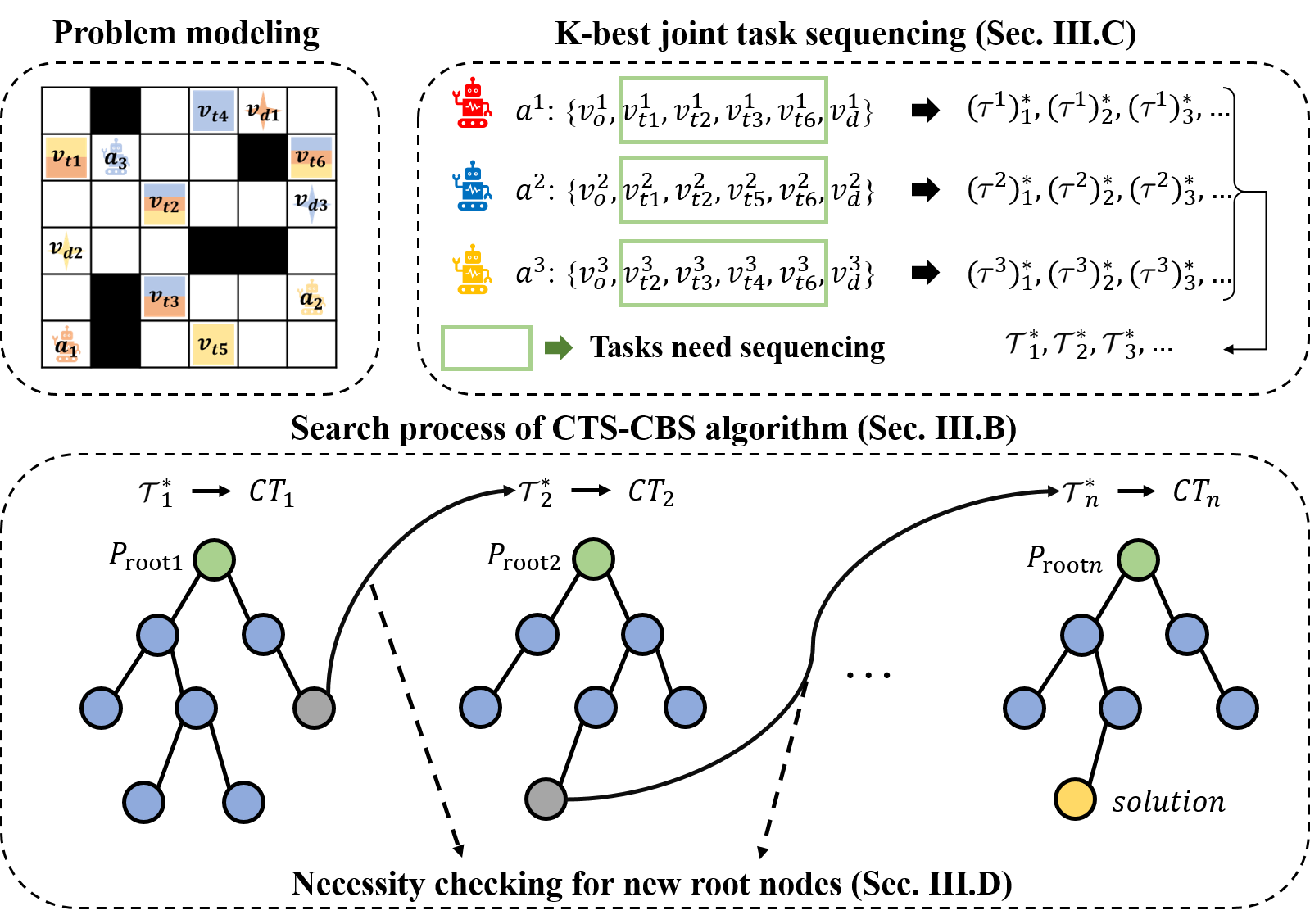} 
    \vspace{-3mm}
    \caption{The main process of CTS-CBS. The joint path $\Pi$ of every node $P$ in search tree $CT_j$ follows the same joint task sequence $\mathcal{\Tau}_j$. The node expansion within a single $CT$ is as CBS to resolve conflicts. If there is a need for a new search tree, the new root node is generated to construct it based on the next best joint task sequence obtained by $K$-best joint task sequencing. CTS-CBS operates in a best-first manner, ensuring that the first node containing a conflict-free path $\Pi$ found has the minimum cost $g$.}
    \label{fig:ctscbs}
    % \vspace{-3mm}
\end{figure}

CTS-CBS starts constructing the first search tree $CT_1$ with the (first) best task sequence $\mathcal{\Tau}_1^*$. The corresponding root node is $P_{\text{root1}}(\emptyset, \Pi_0, g(\Pi_0))$. If there are no conflicts in $\Pi_0$, then $\Pi_0$ is the solution, which is of course optimal. If there are conflicts in $\Pi_0$, then two new nodes will be created to resolve the conflicts and added to the priority queue OPEN. Subsequently, a node $P(\Omega, \Pi, g)$ with the lower $g$-value is popped from the OPEN. In CBS, $P$ would directly undergo conflict detection and subsequent node expansion. However, in CTS-CBS, $P$ first needs to be compared with $\mathcal{\Tau}_2^*$ in terms of cost. If $g$ is greater than $\text{cost}(\mathcal{\Tau}_2^*)$, then the second search tree $CT_2$ with root node $P_{\text{root2}}$ will be generated, and conflict detection and node expansion will be performed on $P_{\text{root2}}$, while $P$ is re-added to the OPEN. Conversely, if $g$ is less than the cost of $\mathcal{\Tau}_2^*$, conflict detection and node expansion are performed on $P$, and $P_{\text{root2}}$ is added to the OPEN. Through this process, CTS-CBS operates in a best-first manner, ensuring that the first node with a conflict-free $\Pi$ found is optimal.

As illustrated in Fig.~\ref{fig:ctscbs}, there are two key components in CTS-CBS: $K$-best joint task sequencing (Section \ref{sub:Kbest}) and necessity checking for new root nodes (Section \ref{sub:nece-check}). Obtaining the $K$-best solutions for the jTSP problem is not easy; in this work, we propose a two-layer A*-like algorithm to solve this problem. The low level is designed for single-agent $K$-best task sequencing, while the upper level is designed for $K$-best joint task sequencing. The necessity checking for new root nodes is required, if a bounded suboptimal solution instead of the optimal one is desired to improve efficiency.

Now, we describe the detailed process of the CTS-CBS algorithm. The pseudo-code is presented in Algorithm~\ref{algo:ctscbs}. We first determine the best task sequence $\mathcal{\Tau}_1^*$ (line 1), create the first search tree with root node $P_{\text{root1}}$ (lines 2-4) whose constraint set is empty and joint path $\Pi$ is obtained by the $LowLevelSearch$. Then, $P_{\text{root1}}$ is inserted into OPEN (line 5). In each iteration of CTS-CBS (lines 6-23), the best node with the lowest $g$-value, denoted as $P_p$ (the subscript 'p' stands for 'popped'), is popped from the OPEN (line 7). Then, the process of necessity checking for new root nodes (Algorithm~\ref{algo:nece-check}) is performed to determine whether a new root node needs to be generated. This process returns node $P_n$ (the subscript 'n' stands for 'new') for subsequent conflict detection and node expansion (line 8). If there are no conflicts in ${P_n}.\Pi$, the algorithm returns ${P_n}.\Pi$, declares the solution found and terminates (lines 10-12). Otherwise, if there are conflicts, the first one is denoted as $C$ (line 13). For each agent involved in $C$, a new node ${P_n}'$ is generated (line 15), its $\Omega$, $\Pi$, and $g$ are updated (lines 16-18), and it is added to OPEN if feasible (lines 19-21). If OPEN is exhausted without finding a conflict-free joint path, the problem is unsolvable, and the algorithm returns false (line 24).

\begin{algorithm}[tb]
    \caption{The CTS-CBS algorithm}\label{algo:ctscbs}
        \SetKwInOut{Input}{Input}

        \Input{CTS-MAPF instance}
        $\mathcal{\Tau}_1^*$ $\gets$ $KBestJointSequencing(K=1)$\\
        $P_{\text{root1}}.\Omega \gets \emptyset$\\
        $P_{\text{root1}}.\Pi \gets LowLevelSearch(\mathcal{\Tau}_1^*, P_{\text{root1}}.\Omega)$\\
        $P_{\text{root1}}.g \gets \text{cost}(P_{\text{root1}}.\Pi)$\\
        Insert $P_{\text{root1}}$ to OPEN\\
        \While{OPEN not empty}{
            $P_{p}(\Omega, \Pi, g) \gets$ pop the best node from OPEN\\
            $P_{n}(\Omega, \Pi, g) \gets CheckNewRoot(P_{p}, \text{OPEN})$\\
            Validate $P_{n}.\Pi$ until a conflict occurs\\
            \If{$P_{n}.\Pi$ has no conflict}{
                \Return $P_{n}.\Pi$ \tcp{Solution found}
            }
            $C \gets$ first conflict $(a^i, a^j, v/e, t)$\\
            \ForEach{agent $a^i$ in $C$}{
                ${P_n}^{'} \gets$ new node\\
                ${P_n}^{'}.\Omega \gets P_n.\Omega \cup (a^i, v/e, t)$\\
                ${P_n}^{'}.\Pi \gets LowLevelSearch(\mathcal{\Tau}(P_n), P_{n}.\Omega)$\\
                \tcp{Only replan agent $a^i$'s path}
                ${P_n}^{'}.g \gets \text{cost}(P_n^{'}.\Pi)$\\
                \If{${P_n}^{'}.g < \infty$ }{
                    Insert ${P_n}^{'}$ to OPEN\\
                }
            }
        }
    \Return false 
\end{algorithm}

\vspace{-3mm}
\subsection{$K$-best Joint Task Sequencing} \label{sub:Kbest}
% $K$-best TSP, dijkstra for $K$-best joint task sequencing
% 先对这一部分做整体介绍，分成两层A*-like算法
To compute the $K$-best joint task sequence, we employ a two-layer A*-like algorithm. The lower layer solves the single agent $K$-best task sequence problem, primarily utilizing the definition of restricted TSP and the partition method \cite{ren2023cbss}. In the upper layer, we combine all the single agent $K$-best task sequences and ultimately obtain the $K$-best joint task sequence, thus deriving the $K$-best solutions for jTSP.

% 先介绍single agent的$K$-best task sequencing
\subsubsection{Single agent $K$-best task sequencing}
Restricted TSP (rTSP) can be defined as follows: Given a graph $G = (V, E)$, and two subsets of edge set $E$, denoted as $I_e$ and $E_e$, the rTSP problem seeks to find an optimal solution $\tau^*$ such that $\tau^*$ includes all edges in $I_e$ and excludes all edges in $E_e$. rTSP needs to be used in conjunction with the partition method, which is to use a priority queue (denoted as OPENs, where 's' stands for 'single-agent') to store all possible $K$-best solutions. In each iteration, the current optimal solution is popped from the queue and partitioned. The solutions to the rTSP problem under each partition are then added back into OPENs.

Algorithm~\ref{algo:singleagentkbest} shows the pseudo-code. Given a graph $G_{SA}$, we first generate the cost matrix $M_C$ (line 1), create the set $N_{L,L}$ (to store $K$-best solutions), and initialize $I_{e,1}$ and $E_{e,1}$ (for included and excluded edges of the best solution) as empty sets. Then, we solve the rTSP problem with $M_C$, $I_{e,1}$, and $E_{e,1}$, derive the best solution $\tau_1^*$, and calculate its corresponding cost as $c_1$ (lines 2-5). The four-element node $(c_1, \tau_1^*, I_{e,1}, E_{e,1})$ is inserted into OPENs (line 6). In the $k$-th iteration of the algorithm (lines 7-24), the current best node is popped from OPENs, and $\tau_k^*$ is added to $N_{L,L}$ (lines 8-9). If solutions is sufficient, the algorithm terminates and returns $N_{L,L}$ (lines 10-12). Otherwise, denote all $m$ edges in $\tau_k^*$ as $\mathcal{L} = \{ e_1, \ldots, e_m \}$ (line 13). For each $l$ edge, generate a new node, update its included edges set and excluded edges set to $I_{e,k+1,l}$ and $E_{e,k+1,l}$, respectively, solve the rTSP problem again to obtain $\tau_{k+1,l}^*$, and calculate its cost $c_{k+1,l}$ (lines 15-19). If $\tau_{k+1,l}^*$ is feasible, add the node to OPENs (lines 20-22).

This method's optimality is proven in \cite{ren2023cbss}. Algorithm~\ref{algo:singleagentkbest} can output single agent $K$-best task sequences, if they exist.

% \vspace{-3mm}
\begin{algorithm}[hbtp]
    \caption{Single agent $K$-best task sequencing}\label{algo:singleagentkbest}
        \SetKwInOut{Input}{Input}

        \Input{$K$, single agent graph $G_{SA}$}
        $M_C \gets GenerateCostMatrix(G_{SA})$\\
        $N_{L, L} \gets \emptyset$\\
        $I_{e, 1} \gets \emptyset$, $E_{e, 1} \gets \emptyset$\\
        $\tau_1^* \gets SolveRTSP(M_C, I_{e, 1}, E_{e, 1})$\\
        $c_1 \gets \text{cost}(\tau_1^*)$\\
        Insert $\left( c_1, \tau_1^*, I_{e, 1}, E_{e, 1} \right)$ to OPENs \\
        \While{\textnormal{OPENs} not empty}{
            $(c_k, \tau_k^*, I_{e, k}, E_{e, k}) \gets$ pop the best node from OPENs\\
            Add $\tau_k^*$ to $N_{L, L}$\\
            \If{$k=K$}{
                \Return $N_{L, L}$ 
            }
            Index edges in $\tau_k^*$ as $\mathcal{L}=\{e_1, e_2, ..., e_m\}$\\
            \ForEach{$l \in \mathcal{L}$}{
                $(c_{k+1,l}, \tau_{k+1,l}^*, I_{e,k+1,l}, E_{e,k+1,l}) \gets$ new node\\
                $I_{e,k+1,l} \gets I_{e,k} \cup \{e_1,...,e_{l-1}\}$\\
                $E_{e,k+1,l} \gets E_{e,k} \cup \{e_{l}\}$\\
                $\tau_{k+1,l}^* \gets SolveRTSP(M_C, I_{e,k+1,l}, E_{e,k+1,l})$\\
                $c_{k+1,l} \gets \text{cost}(\tau_{k+1,l}^*)$\\
                \If{$c_{k+1,l} < \infty$ }{
                    Insert $(c_{k+1,l}, \tau_{k+1,l}^*, I_{e,k+1,l}, E_{e,k+1,l})$ to OPENs\\
                }
            }
        }
    \Return false
\end{algorithm}

% 再介绍$K$-best joint task sequencing
\subsubsection{$K$-best joint task sequencing}
After obtaining the single agent $K$-best task sequence, we can proceed to solve the joint $K$-best task sequencing problem. The overall idea remains to use an A*-like best first manner. Algorithm~\ref{algo:kbestjoint} demonstrates the overall process. We initialize the set $N_{L,H}$ to store the $K$-best solutions. $P_{L,k}$ is the priority set of each $\tau^i$ in $\mathcal{\Tau}_k^*$. For example, $\mathcal{\Tau}_1^* = \{ (\tau^i)_1^* \mid i \in [N] \}$, so $P_{L,1} = \{1, 1, \ldots, 1\}$ (a total of $N$ ones). After obtaining $\mathcal{\Tau}_1^*$ and its cost $C_1$ (lines 2-10), we insert the node $(C_1, \mathcal{\Tau}_1^*, P_{L,1})$ into the priority queue OPENj (where 'j' stands for 'joint') (line 11). For the $k$-th iteration (lines 12-28), we pop the node $(C_k, \mathcal{\Tau}_k^*, P_{L,k})$ with the least $C$, and add $\mathcal{\Tau}_k^*$ to $N_{L,H}$ (lines 13-14). If $k$ is equal to $K$, the algorithm terminates and returns $N_{L,H}$ (lines 15-17). Otherwise, for the $l$-th priority number in $P_{L,k}$, we generate a new node with its priority set denoted as $P_{L,k+1,l}$. The only difference between $P_{L,k+1,l}$ and $P_{L,k}$ is that the $l$-th priority number in $P_{L,k+1,l}$ is increased by 1 compared to the corresponding number in $P_{L,k}$ (lines 19-20). Based on the new $P_{L,k+1,l}$, we can obtain $\mathcal{\Tau}_{k+1,l}^*$ and $C_{k+1,l}$, and add it to OPENj if feasible (lines 21-26).

We use an example with 3 agents to explain Algorithm~\ref{algo:kbestjoint} more clearly, as shown in Fig \ref{fig:k-best-ts}. Let the single agent $K$-best task sequencing for $a^i$ be $\{ (\tau^i)_1^*, (\tau^i)_2^*, \ldots \}$. The optimal solution for the joint task sequencing, $\mathcal{\Tau}_1^*$, is obviously $\{ (\tau^1)_1^*, (\tau^2)_1^*, (\tau^3)_1^* \}$, corresponding to $P_{L,1} = \{ 1, 1, 1 \}$. The second-best solution $\mathcal{\Tau}_2^*$ has three possibilities: $\mathcal{\Tau}_{2,1}^* = \{ (\tau^1)_2^*, (\tau^2)_1^*, (\tau^3)_1^* \}$ ($P_{L,2,1} = \{ 2, 1, 1 \}$), $\mathcal{\Tau}_{2,2}^* = \{ (\tau^1)_1^*, (\tau^2)_2^*, (\tau^3)_1^* \}$ ($P_{L,2,2} = \{ 1, 2, 1 \}$), and $\mathcal{\Tau}_{2,3}^* = \{ (\tau^1)_1^*, (\tau^2)_1^*, (\tau^3)_2^* \}$ ($P_{L,2,3} = \{ 1, 1, 2 \}$). Thus, these three nodes are added to OPENj. Assuming $\mathcal{\Tau}_2^* = \mathcal{\Tau}_{2,1}^*$, we pop it from OPENj, while $\mathcal{\Tau}_{2,2}^*$ and $\mathcal{\Tau}_{2,3}^*$ remain in OPENj. At this point, $\mathcal{\Tau}_3^*$ must be one of $\mathcal{\Tau}_{2,2}^*$, $\mathcal{\Tau}_{2,3}^*$, and the extensions of $\mathcal{\Tau}_{2,1}^*$: $\mathcal{\Tau}_{3,1}^*$ ($P_{L,3,1} = \{ 3, 1, 1 \}$), $\mathcal{\Tau}_{3,2}^*$ ($P_{L,3,2} = \{ 2, 2, 1 \}$), and $\mathcal{\Tau}_{3,3}^*$ ($P_{L,3,3} = \{ 2, 1, 2 \}$). Therefore, $\mathcal{\Tau}_{3,1}^*$, $\mathcal{\Tau}_{3,2}^*$, and $\mathcal{\Tau}_{3,3}^*$ are added to OPENj, and in the next iteration, the best one in OPENj is popped. By repeating this process, Algorithm~\ref{algo:kbestjoint} can output the $K$-best joint task sequencing.

\begin{figure}[htpb]
    \centering
    \includegraphics[width=1\linewidth]{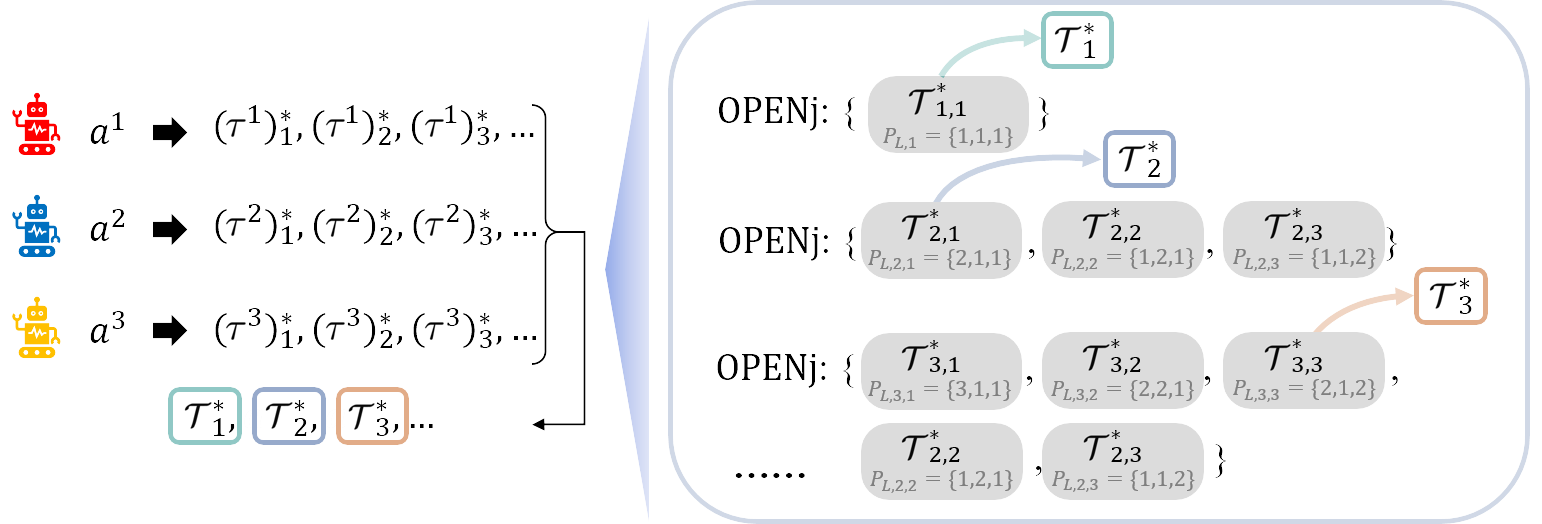} 
    \vspace{-5mm}
    \caption{The illustration of Algorithm \ref{algo:kbestjoint}. In each iteration, the best node from OPENj is popped, followed by adding its extensions into OPENj. By iterating this process, Algorithm \ref{algo:kbestjoint} is capable of producing the first, second, ..., $k$-th optimal solutions for joint task sequencing.}
    \label{fig:k-best-ts}
    % \vspace{-3mm}
\end{figure}

\begin{algorithm}[htpb]
    \caption{$K$-best joint task sequencing}\label{algo:kbestjoint}
        \SetKwInOut{Input}{Input}

        \Input{$K$, graph $G$, agents set $A$ and number $N$, vertices set $V_s, V_d, V_t$, assignment $f$}
        $N_{L, H} \gets \emptyset$\\
        $P_{L, 1} \gets \emptyset$\\
        \ForEach{$i \in [N]$}{
            $G^i \gets GenerateSingleGraph(G, i, V_s, V_d, V_t, f)$\\
            $N_{L,L} \gets SingleAgentKBest(G^i, 1)$\\
            ${(\tau^i)}_1^* \gets N_{L,L}(1)$\\
            Add (1) to $P_{L,1}$\\
        }
        $\mathcal{\Tau}_1^* = \{{(\tau^i)}_1^* \mid i \in [N]\}$\\
        $C_1 \gets \text{cost}\mathcal{\Tau}_1^*$\\
        Insert $\left( C_1, \mathcal{\Tau}_1^*, P_{L, 1} \right)$ to OPENj \\
        
        \While{\textnormal{OPENj} not empty}{
            $(C_k, \mathcal{\Tau}_k^*, P_{L, k}) \gets$ pop the best node from OPENj\\
            Add $\mathcal{\Tau}_k^*$ to $N_{L, H}$\\
            \If{$k=K$}{
                \Return $N_{L, H}$ 
            }
            \ForEach{$l \in [N]$}{
                $(C_{k+1,l}, \mathcal{\Tau}_{k+1,l}^*, P_{L,k+1,l}) \gets$ new node\\
                $P_{L,k+1,l}(l) \gets P_{L,k}(l)+1$\\
                $N_{L,L} \gets SingleAgentKBest(G^l, P_{L,k+1,l}(l))$\\
                $\mathcal{\Tau}_{k+1,l}^*(l) \gets N_{L,L}(l)$\\
                $C_{k+1,l} \gets \text{cost}(\mathcal{\Tau}_{k+1,l}^*)$\\
                \If{$C_{k+1,l} < \infty$ }{
                    Insert $(C_{k+1,l}, \mathcal{\Tau}_{k+1,l}^*, P_{L,k+1,l})$ to OPENj\\
                }
            }
        }
    \Return false
\end{algorithm}

Through the above algorithms, we obtain the $K$-best solutions for the jTSP problem. Each joint task sequencing in the results will serve as a root node to generate the search forest of CTS-CBS.

\subsection{Necessity Checking for New Root Nodes}\label{sub:nece-check}
Next, we will introduce another key issue in CTS-CBS: when to generate a new search tree. We address this issue through a necessity checking process, while ensuring the optimality / bounded suboptimality of the algorithm. First, we introduce the definition of bounded suboptimality:
\begin{definition}
Bounded suboptimality refers to the property of an algorithm that guarantees to return a solution $\Pi$ with $\text{cost}(\Pi) \leq (1+\omega) \cdot \text{cost}(\Pi^*)$, where $\Pi^*$ is the unknown optimal solution. $\omega$ is the user-defined boundary parameter.
\end{definition}

The process of necessity checking for a new search tree is shown in Algorithm~\ref{algo:nece-check}, with node $P_p$, priority queue OPEN, and boundary parameter $\omega$ as input. We denote the already generated joint task sequences as $\{\mathcal{\Tau}_1^*, \mathcal{\Tau}_2^*,...,\mathcal{\Tau}_R^*\}$ (line 1). New search trees need to be generated under one of two circumstances: First, when OPEN is empty after popping node $P_p$, which indicates that all nodes in the current search trees have been explored; second, when the cost $g$ of $P_p$ exceeds $(1+\omega)$ times the cost of $\mathcal{\Tau}_R^*$ (lines 2-3). Note that in $JointKBest$, we omit other input variables besides $K$ for simplicity of expression. Then, if the next best joint task sequence $\mathcal{\Tau}_{R+1}^*$ exists (line 4), a new root node $P_{nr}$ is generated (lines 5-8). If the cost of $P_p$ is less than the cost of $P_{nr}$, $P_p$ is returned and $P_{nr}$ is inserted to OPEN (lines 9-12). Otherwise, $P_{nr}$ is returned and $P_p$ is inserted to OPEN (lines 13-14). If neither of the above two conditions is met, or if they are met but the next optimal solution $\mathcal{\Tau}_{R+1}^*$ does not exist, then node $P_p$ is returned (line 17).

If the cost $g$ of $P_p$ is less than the cost of the $\mathcal{\Tau}_R^*$, the joint task sequence of current last generated search tree $CT_R$, then there is no need to generate a new search tree, and node $P_p$ is returned (lines 2-4). Otherwise, a new search tree corresponding to the joint task sequence $\mathcal{\Tau}_{R+1}^*$ needs to be generated using the algorithm in section~\ref{sub:Kbest} (line 5). Subsequently, based on $\mathcal{\Tau}_{R+1}^*$, a new root node $P_{nr}$ is generated (lines 6-9). If the cost of $P_p$ is less than the cost of $P_{nr}$, $P_p$ is returned and $P_{nr}$ is inserted to OPEN. Otherwise, $P_{nr}$ is returned and $P_p$ is inserted to OPEN.

% 伪代码，介绍
\vspace{-1.5mm}
\begin{algorithm}[htbp]
    \caption{Necessity checking for new search tree}\label{algo:nece-check}
        \SetKwInOut{Input}{Input}

        \Input{$P_p$, OPEN, $\omega$}
        denote the already generated joint task sequences as $\{\mathcal{\Tau}_1^*, \mathcal{\Tau}_2^*,...,\mathcal{\Tau}_R^*\}$\\
        \If{$\text{OPEN}=\emptyset \lor P_{p}.g > (1+\omega)\text{cost}(\mathcal{\Tau}_R^*)$}{
            $\mathcal{\Tau}_{R+1}^* \gets JointKBest(K=R+1)$\\
            \If{$\mathcal{\Tau}_{R+1}^* \neq false$}{
                $P_{nr} \gets$ new root node\\
                $P_{nr}.\Omega \gets \emptyset$\\
                $P_{nr}.\Pi \gets LowLevelSearch(\mathcal{\Tau}_{R+1}^*, P_{nr}.\Omega)$\\
                $P_{nr}.g \gets \text{cost}(P_{nr}.\Pi)$\\
                \If{$P_{p}.g < P_{nr}.g$}{
                    Insert $P_{nr}$ to OPEN\\
                    \Return $P_p$
                }
                Insert $P_{p}$ to OPEN\\
                \Return $P_{nr}$
            }
        }
        \Return $P_p$
\end{algorithm}

% 介绍\omega的影响
\vspace{-2mm}
The boundary parameter $\omega$ constrains the cost of the solution output by the algorithm to be within $\omega$ times the cost of the optimal solution. If $\omega=0$, then once the cost of the current node $P_p$ exceeds the cost of the root node of the last generated search tree, a new search tree with the next best $\mathcal{\Tau}^*$ needs to be generated. This ensures the solution is optimal, but the computational burden is the highest because the algorithm needs to constantly solve for the $K$-best joint task sequencing, which is time-consuming. If $\omega = \infty$, CTS-CBS will generate only one search tree with the best joint task sequence $\mathcal{\Tau}_1^*$ (because line 2 in Algorithm~\ref{algo:nece-check} will always be satisfied), and CTS-CBS will look for a collision-free joint path that follows $\mathcal{\Tau}_1^*$. In this case, the completeness and optimality of the algorithm cannot be guaranteed, but the computational efficiency is relatively high (if a feasible solution can be found). If $\omega$ is between 0 and $\infty$, the algorithm can balance optimality and efficiency, where the completeness of the algorithm is still guaranteed, and it exhibits bounded suboptimality (proof in Section~\ref{Sec:TheoAnal}).

\subsection{Implementation Details}

% 提高效率-K best sequencing的next best 方法
\subsubsection{$K$-best joint task sequencing}
Solving the $K$-best joint task sequencing problem is time-consuming. In practical applications, we improve efficiency through the following methods: In the upper layer, we keep recording OPENj and only calculate the next best joint task sequence when necessary to generate a new search tree. By this way, when $JointKBest(K=R+1)$ (Algorithm~\ref{algo:kbestjoint}) is called in Algorithm~\ref{algo:nece-check}, we do not need to start from scratch but can continue from $K=R$. In the lower layer, we use $N_{L,L}$ to record and store the single agent $K$-best sequences. Thus, each time Algorithm~\ref{algo:singleagentkbest} is called, we can first check if it has already been computed and reuse the previously calculated results.

% 求解single agent rTSP：用的gurobi
\subsubsection{Tools to solve single agent rTSP}
Several well-established solvers can be employed to solve the rTSP. For example, we can handle the constraints in rTSP by setting the weights of edges in $I_e$ to 0 and assigning a very large number to the weights of edges in $E_e$, thereby treating it as an ordinary TSP. This allows us to use relatively high-efficient solvers. In this study, we use Gurobi as the rTSP solver due to its simplicity in implementation, good generalization, and satisfied performance. Although Gurobi does not guarantee an optimal solution, it often provides optimal solutions in many scenarios, especially in relatively simple cases. Additionally, we have modularized the rTSP solver to facilitate the replacement of the existing module with a more performant rTSP solver in the future, should one become available.

% lowlevel search： A*-like algorithm used in \cite
\subsubsection{Low-level search of CTS-CBS} 
For the low-level search of CTS-CBS, A*-like algorithms can be used. We use SIPP \cite{phillips2011sipp} because it has been demonstrated in \cite{ren2023cbss} that SIPP runs faster than space-temporal A* in the low-level modules of CBS-like algorithms.

\section{Theoretical Analysis} \label{Sec:TheoAnal}

% \subsection{Complexity Analysis}

% \subsection{Solution Optimality}
% % 先写这一部分，主要是完备性和有界次优性。
Now, we will prove the completeness and bounded suboptimality of the CTS-CBS algorithm.

\begin{theorem}
The CTS-CBS algorithm is complete when $\omega < \infty$, which means that if the input CTS-MAPF instance has a solution, CTS-CBS will always find one.
\end{theorem}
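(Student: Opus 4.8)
The plan is to reduce completeness to a termination-plus-exhaustiveness argument: I would show that (i) the algorithm always terminates, and (ii) it returns \texttt{false} only after every joint task sequence has been examined without yielding a conflict-free joint path. Since any solution of a CTS-MAPF instance must follow \emph{some} joint task sequence, (ii) implies that termination with \texttt{false} certifies infeasibility, and the contrapositive yields completeness for all $\omega < \infty$.

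First I would establish the finiteness of the search. Each agent $a^i$ visits only the finite task set $\{v_t : a^i \in f(v_t)\}$ together with $v_s^i$ and $v_d^i$, so it admits finitely many admissible orderings, and hence the number of joint task sequences is finite; call it $K_{\max}$. By the optimality of the $K$-best routine (Algorithms~\ref{algo:singleagentkbest}--\ref{algo:kbestjoint}, proven in \cite{ren2023cbss}), the sequences $\mathcal{\Tau}_1^*, \mathcal{\Tau}_2^*, \ldots$ are produced in non-decreasing cost order and eventually enumerate all $K_{\max}$ of them. Within a single tree $CT_j$ the search is exactly CBS restricted to paths following the fixed sequence $\mathcal{\Tau}_j^*$: the low-level planner returns a shortest constrained path or reports infeasibility, and each conflict split adds one constraint, so a child's cost is never smaller than its parent's. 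I would then invoke the completeness of CBS (reviewed in Section~\ref{Sec:prelim}, after \cite{sharon2015conflict}) together with the standard bounded-makespan observation: the joint state space (agent positions together with task progress) is finite, so if a conflict-free joint path following $\mathcal{\Tau}_j^*$ exists, one exists with bounded makespan; consequently only finitely many $CT_j$-nodes have cost below any fixed bound, and the search within $CT_j$ either finds such a path or is exhausted in finitely many expansions.

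Next I would argue that every needed tree is generated. With $\omega < \infty$ the threshold $(1+\omega)\,\text{cost}(\mathcal{\Tau}_R^*)$ is finite, so Algorithm~\ref{algo:nece-check} spawns a new root whenever a popped node's cost exceeds it; together with the \texttt{OPEN}$=\emptyset$ disjunct this guarantees that as long as a further sequence $\mathcal{\Tau}_{R+1}^*$ exists and no solution has been returned, the next tree is eventually created. Concretely, because node costs are monotone along branches and only finitely many nodes lie below any finite threshold, the algorithm cannot remain confined to the already-generated trees indefinitely without either returning a solution or increasing $R$; since $R \le K_{\max}$, all trees $CT_1, \ldots, CT_{K_{\max}}$ are generated in finite time. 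This is the step I expect to be the main obstacle: one must rule out the ``trapped'' scenario in which the best-first queue keeps serving low-cost nodes of existing trees forever while a solution lives in an ungenerated tree. The resolution rests precisely on the finiteness of the sub-threshold node set and the monotonicity of costs, which force the popped cost to climb past each successive finite threshold.

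Finally I would combine the pieces. The total search space is the union of at most $K_{\max}$ trees, each contributing finitely many relevant nodes, so \texttt{OPEN} is eventually exhausted and the algorithm terminates. It returns \texttt{false} (line~24 of Algorithm~\ref{algo:ctscbs}) only when every one of the $K_{\max}$ trees has been fully explored without the validation in lines~9--11 ever certifying a conflict-free $\Pi$; equivalently, no joint task sequence admits a conflict-free joint path. As any feasible solution follows one such sequence, this means the instance is infeasible. Contrapositively, if the instance has a solution, CTS-CBS terminates by returning a conflict-free joint path, which establishes completeness for every $\omega < \infty$.
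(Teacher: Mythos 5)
Your overall frame has a genuine flaw: the reduction rests on claim (i), that CTS-CBS \emph{always} terminates, justified by the assertion that each tree contributes ``finitely many relevant nodes, so \texttt{OPEN} is eventually exhausted.'' This is false. A constraint tree $CT_j$ is in general \emph{infinite}: wait actions let the low level return a feasible (ever longer) path at every node, constraints carry unbounded time indices, and conflict splitting can go on forever with node costs growing without bound --- consider two agents that must swap across a single corridor edge, where every node of the tree has feasible paths but every pair of paths conflicts. What is finite is only the set of nodes \emph{below any fixed cost bound}, not the tree itself. Consequently, on an unsolvable instance (or within any $CT_j$ whose task sequence admits no conflict-free joint path), \texttt{OPEN} is never exhausted and line~24 of Algorithm~\ref{algo:ctscbs} need never be reached; your intermediate claim that the search within $CT_j$ ``either finds such a path or is exhausted in finitely many expansions'' fails for the same reason. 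Since non-termination is a third possible outcome besides ``returns a solution'' and ``returns \texttt{false},'' the contrapositive of your claim (ii) --- which is itself correct --- does not deliver the theorem: from ``\texttt{false} implies infeasible'' you can only conclude that a feasible instance never yields \texttt{false}, not that it yields a solution.

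The paper avoids this entirely by arguing directly on a feasible instance, and the ingredients you assembled in your second and third paragraphs are exactly what that direct argument needs: any solution $\Pi$ has finite cost and follows some sequence $\mathcal{T}_j^*$; there are finitely many joint task sequences; each tree has only finitely many nodes below any fixed cost bound; and, because $\omega<\infty$ makes the threshold $(1+\omega)\,\mathrm{cost}(\mathcal{T}_R^*)$ finite, the best-first discipline plus the necessity check force the popped cost to climb past each threshold, so $CT_j$ is generated after finitely many expansions and $\Pi$ (or a cheaper solution) is popped and validated after finitely many more --- this is precisely your resolution of the ``trapped'' scenario. The repair is therefore structural rather than substantive: drop the unconditional termination/exhaustion frame and the contrapositive, and let those two paragraphs conclude directly that a feasible instance is solved in finite time.
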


\begin{proof}
As previously mentioned, node expansion in CTS-CBS occurs in only two forms: generating a new root node or expanding nodes within an existing $CT$. Note that if the CTS-MAPF problem has a solution $\Pi$, it must follow a specific joint task sequence $\mathcal{T}_j^*$, meaning that the node corresponding to solution $\Pi$ must be within the search tree $CT_j$ corresponding to $\mathcal{T}_j^*$. Additionally, for a CTS-MAPF problem, the number of possible task sequences is finite, and within any $CT$, the number of nodes with a cost below a certain value is also finite (until all nodes in that search tree are exhausted). CTS-CBS generates new $CTs$ in a best-first manner and sequentially inserts all nodes under these trees into OPEN. For a solution $\Pi$ with finite cost, CTS-CBS is guaranteed to expand the $CT$ containing node with joint path $\Pi$, within a finite number of expansions and find $\Pi$ during the process of popping all nodes from OPEN. Therefore, CTS-CBS is guaranteed to find a solution to the problem within finite time.
\end{proof}

\begin{theorem}
The CTS-CBS algorithm is $\omega$-bounded sub-optimal when $\omega < \infty$. For a CTS-MAPF instance, if the cost of the optimal solution is $g^*$, the CTS-CBS algorithm can guarantee a solution $\Pi$ such that $g(\Pi) \leq (1+\omega) g^*$.
\end{theorem}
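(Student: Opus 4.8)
The plan is to bound the cost $g(\Pi)$ of the returned solution by $(1+\omega)g^*$ through a case analysis on whether the search tree $CT_{j^*}$ corresponding to the optimal solution's joint task sequence $\mathcal{T}_{j^*}^*$ has been generated by the time the algorithm terminates (termination itself being guaranteed by the completeness result, Theorem~1). Two facts will anchor the argument. First, because $\text{cost}(\tau^i)$ is defined as a sum of minimum inter-vertex costs that ignores all conflicts, $\text{cost}(\mathcal{T}_{j^*}^*)$ is a lower bound on the cost of \emph{any} joint path following $\mathcal{T}_{j^*}^*$; in particular $g^* \ge \text{cost}(\mathcal{T}_{j^*}^*)$. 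Second, the $K$-best procedure yields the non-decreasing ordering $\text{cost}(\mathcal{T}_1^*) \le \text{cost}(\mathcal{T}_2^*) \le \cdots$, and Algorithm~\ref{algo:nece-check} generates the trees strictly in this index order, one at a time.

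First I would handle the case in which $CT_{j^*}$ has been generated by termination. Here I would invoke the standard CBS optimality lemma restricted to the single tree $CT_{j^*}$: since $\Pi^*$ follows $\mathcal{T}_{j^*}^*$ and is conflict-free, as long as the conflict-free goal of $CT_{j^*}$ has not yet been returned, OPEN always contains a node $Q$ of $CT_{j^*}$ whose constraint set is consistent with $\Pi^*$, so that $Q.g \le \text{cost}(\Pi^*) = g^*$. Because CTS-CBS pops the globally minimum-$g$ node $P_p$ from the shared OPEN and $CheckNewRoot$ returns either $P_p$ or a node of no greater cost, the returned $\Pi$ satisfies $g(\Pi) \le P_p.g \le Q.g \le g^*$, which lies within $(1+\omega)g^*$.

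Next I would treat the complementary case in which $CT_{j^*}$ is never generated. Because trees appear in strict index order, the index $R$ of the last generated tree then satisfies $R < j^*$, whence $\text{cost}(\mathcal{T}_R^*) \le \text{cost}(\mathcal{T}_{j^*}^*) \le g^*$ by monotonicity. I would then read off the three ways $CheckNewRoot$ can emit the returned node: (i) it returns the popped $P_p$ without spawning a new tree, so the negation of the trigger condition forces $g(\Pi)=P_p.g \le (1+\omega)\,\text{cost}(\mathcal{T}_R^*)$; (ii) it returns a freshly created root $P_{nr}$ of the last tree, giving $g(\Pi)=\text{cost}(\mathcal{T}_R^*)$; or (iii) it returns $P_p$ after creating $P_{nr}$ with $P_p.g < P_{nr}.g$, giving $g(\Pi) < \text{cost}(\mathcal{T}_R^*)$. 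In every sub-case $g(\Pi) \le (1+\omega)\,\text{cost}(\mathcal{T}_R^*) \le (1+\omega)g^*$. (The remaining possibility, that the trigger fires but $\mathcal{T}_{R+1}^*$ does not exist, would mean all sequences are exhausted and $R \ge j^*$, contradicting this case.) Combining both cases yields $g(\Pi)\le(1+\omega)g^*$.

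The main obstacle I anticipate is the first case: making rigorous that interleaving many constraint trees through a single global OPEN—and in particular inserting new root nodes whose $g$-values may fall below the current frontier—does not spoil the best-first guarantee needed to apply the CBS lemma. The delicate point is to argue that the globally minimum popped cost still dominates the cost $Q.g$ of the witness node sitting in $CT_{j^*}$, so that termination cannot occur at a node costlier than $g^*$ once $CT_{j^*}$ is on the queue. Everything else reduces to reading off the trigger inequality $P_p.g > (1+\omega)\,\text{cost}(\mathcal{T}_R^*)$ in Algorithm~\ref{algo:nece-check} together with the monotonicity of the $K$-best costs.
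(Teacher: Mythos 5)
Your proposal is correct and follows essentially the same argument as the paper: a two-case analysis on whether the constraint tree containing the optimal solution has been generated by termination, using the trigger inequality $P_p.g \le (1+\omega)\,\mathrm{cost}(\mathcal{T}_R^*)$ together with monotonicity of the $K$-best sequence costs in the ``never generated'' case, and the best-first property of the shared OPEN queue in the ``generated'' case. Your treatment is in fact somewhat more careful than the paper's (explicit witness-node invariant, explicit sub-cases for what $CheckNewRoot$ returns), but the decomposition and key inequalities are the same.
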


\begin{proof}
Consider the optimal solution corresponding to the node $P_B^*(\Omega_B^*, \Pi_B^*, g_B^*)$, residing in $CT_B$ with the joint task sequence $\mathcal{\Tau}_B^*$. The node returned by CTS-CBS is $P_S^*(\Omega_S^*, \Pi_S^*, g_S^*)$, associated with $CT_S$ and the joint task sequence $\mathcal{\Tau}_S^*$. Since search trees are generated in a priority order, there are only two possibilities for $CT_B$ and $CT_S$: 1) $CT_S$ is generated before $CT_B$; 2) $CT_S$ is generated after $CT_B$ (including the case where $CT_S$ and $CT_B$ are the same tree). 

In the first case, it must be that $g_S^* \leq (1 + \omega) \text{cost}(\mathcal{\Tau}_B^*)$; otherwise, the search tree $CT_B$ corresponding to $\mathcal{\Tau}_B^*$ would be generated. Furthermore, $g_B^* \geq \text{cost}(\mathcal{\Tau}_B^*)$ because the costs of nodes within the same search tree are non-decreasing. Thus, we have: $g_S^* \leq (1 + \omega) \text{cost}(\mathcal{\Tau}_B^*) \leq (1 + \omega) g_B^*$.

In the second case, since $CT_B$ has already been generated, all nodes in $CT_B$ will be sequentially added to OPEN until the algorithm returns a solution. Therefore, $P_B^*$ will also be added to OPEN. Given that $P_B^*(\Omega_B^*, \Pi_B^*, g_B^*)$ is the optimal solution, $g_B^*$ is the smallest cost among all feasible solutions, and since OPEN always pops the node with the smallest cost, the solution found $P_S^*$ must be $P_B^*$, meaning $g_S^* = g_B^*$. 

In summary, the bounded suboptimality of the CTS-CBS algorithm has been proved.
\end{proof}

\section{Experiments} \label{Sec:Experiments}
In this section, we first introduce the settings of the CTS-CBS algorithm, the datasets used for experiments, and the four baselines for evaluation. The performance of the CTS-CBS algorithm is then compared against baselines. Subsequently, the impact of the parameter $\omega$ on the efficiency and quality will be investigated. Finally, the CTS-CBS algorithm is deployed on practical robots to validate its applicability in real-world robotic scenarios.

\vspace{-3mm}
\subsection{Settings, Datasets and Baselines}

% settings:主要说一下算法的设置：gurobi，SIPP，3minutes，c++，电脑配置。
As previously mentioned, Gurobi is utilized as the rTSP solver, while SIPP is employed as the low-level search method. For each instance in the experiments, a time limit of \textbf{three minutes} is set. This implies that if the algorithm fails to produce a solution within three minutes, it is considered a failure. The CTS-CBS algorithm is implemented in C++, and all experiments are conducted on a computer equipped with an Intel Core i7-12700KF CPU, an NVIDIA GeForce RTX 3080 GPU, and 32GB of RAM, running on a Linux system.

% datasets：说一下来源于movingAI，做了哪些改动。两种形式，一种每个智能体有单独的终点，以及一系列目标点，称作CTSMAPF；另外一种将终点也视作普通的目标点，也就是终点不是必须最后到达。这种情况下，CTSMAPF退化为MGMAPF，在这种dataset上，只需要对CTS-CBS算法的底层gurobi部分做轻微的改动，即可方便地与其他算法进行比较。在B\D部分，我们采用的是CTSMAPF数据集，而在C部分采用MGMAPF数据集。
Datasets play a crucial role in evaluating the performance of algorithms. Since there are currently no publicly available datasets specifically designed for the CTS-MAPF problem, we adapted the MAPF benchmarks provided by Moving AI \cite{stern2019mapf}. We selected three maps with varying levels of difficulty: \texttt{Empty}, \texttt{Random}, and \texttt{Room}, as shown in Fig. 4. For each map, the benchmarks provide 50 scenarios, each containing about 100 pieces of agent data with specified start and goal positions. We used the first 20 pieces as agent data for the CTS-MAPF problem and took the remaining start positions as task data. Through this adaptation, we obtained 50 scenarios for each map, all of which are suitable for testing the effectiveness of algorithms designed for the CTS-MAPF problem.

In the aforementioned dataset (referred to as the \texttt{CTS-MAPF} dataset), each agent is assigned a unique goal, along with a sequence of task points. The agent must start from its initial position, visit all tasks in a specified order, and finally reach its designated goal. Based on this dataset, we also provide an alternative dataset where the goal is treated as an ordinary task, meaning the agent's goal does not necessarily have to be the destination. Under this condition, the CTS-MAPF problem transforms to the MG-MAPF problem, and we thus refer to this dataset as the \texttt{MG-MAPF} dataset. In this second dataset, only minor modifications to the CTS-CBS algorithm are needed to facilitate comparison with other algorithms. In Section \ref{Sec:Experiments-B}, we employ the \texttt{MG-MAPF} dataset to validate the performance of the CTS-CBS algorithm against baselines. In Sections \ref{Sec:Experiments-C}, we utilize the \texttt{CTS-MAPF} dataset to conduct an in-depth analysis of the CTS-CBS algorithm. 

\vspace{-3mm}
\begin{figure}[htbp]
  \centering
  \includegraphics[width=0.95\linewidth]{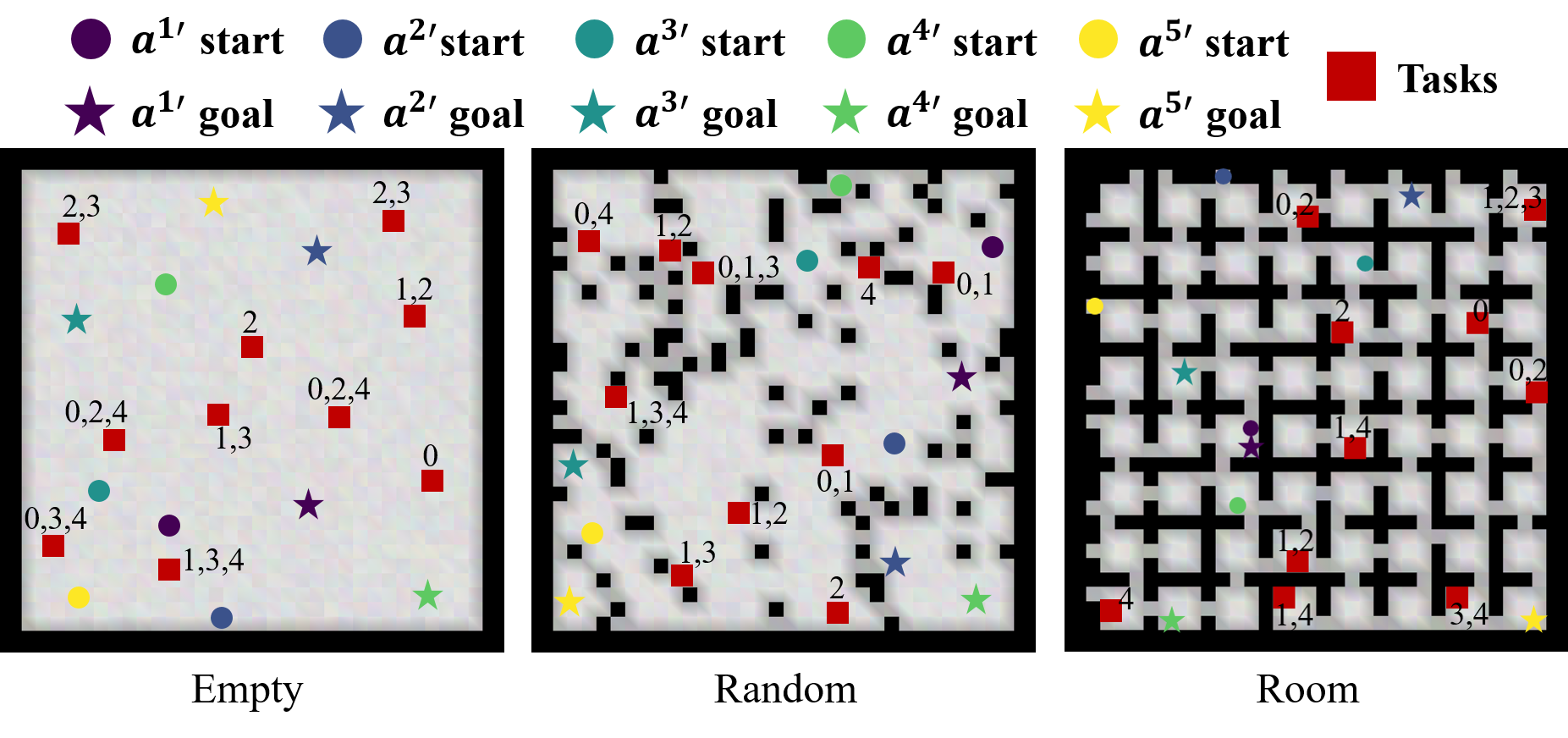} 
    \vspace{-3mm}
  \caption{The maps used in experiments. The images also visualize one specific instance on different maps when the number of agents is 5 and the number of tasks is 10. Circles of different colors represent the agents' start locations, while stars denote their goal positions. Red squares represent tasks, with the numbers beside them indicating which agents need to visit them.}
  \label{fig:maps}
  % \vspace{-3mm}
\end{figure}

% baselines：1. OPTIMALCBS：CBS+DESIGNED a*,参考arxiv上那篇论文；2. HCBS；3. MGCBS；4. sequential method；5. our method, with omega=0.01
We select four baselines in Section \ref{Sec:Experiments-B} for comparison. 
\begin{enumerate}
\item The \textbf{first} one is \textbf{Optimal-CBS}, which uses the CBS framework combined with a low-level designed A* method capable of finding the shortest path for the agent to visit all tasks. Optimal-CBS has optimality guarantee.

\item The \textbf{second} one is \textbf{HCBS} proposed in \cite{surynek2021multi}. HCBS operates on three levels. The top level is the CBS framework, the middle is an A*-like algorithm to solve the visiting order of agents, and the bottom is a traditional A* algorithm for single-agent path planning.

\item The \textbf{third} one is \textbf{MGCBS} proposed in \cite{tang2024mgcbs}. MGCBS decouples the goal-safe interval visiting order search from single-agent path planning and employs a Time-Interval Space Forest to enhance efficiency. HCBS and MGCBS are considered the current state-of-the-art (SOTA) methods for the MG-MAPF problem.

\item The \textbf{fourth} one is the sequential method mentioned earlier, which corresponds to the case where the parameter $\omega$ in CTS-CBS is infinite. It can also be seen as a greedy method. For simplicity, this method will be referred to as \textbf{S(equential)-CBS} in the following text.

\end{enumerate}

% 最后说一下数量问题。
In all experiments, the number of agents $N$ is from set $\{5, 10, 20\}$, and the number of tasks $M$ is from set $\{10, 20, 30, 40, 50\}$. Each task is randomly assigned to 1,2, or 3 agents. Fig. 4 also illustrates the configuration of a specific instance when $N=5$ and $M=10$ across different maps. Circles of different colors represent the agents' starts, while stars denote their goals. Red squares represent tasks, with the numbers beside them indicating agents that need to visit them.

\vspace{-3mm}
\begin{figure*}[t]
 \begin{center}    
      \includegraphics[width=0.95\linewidth]{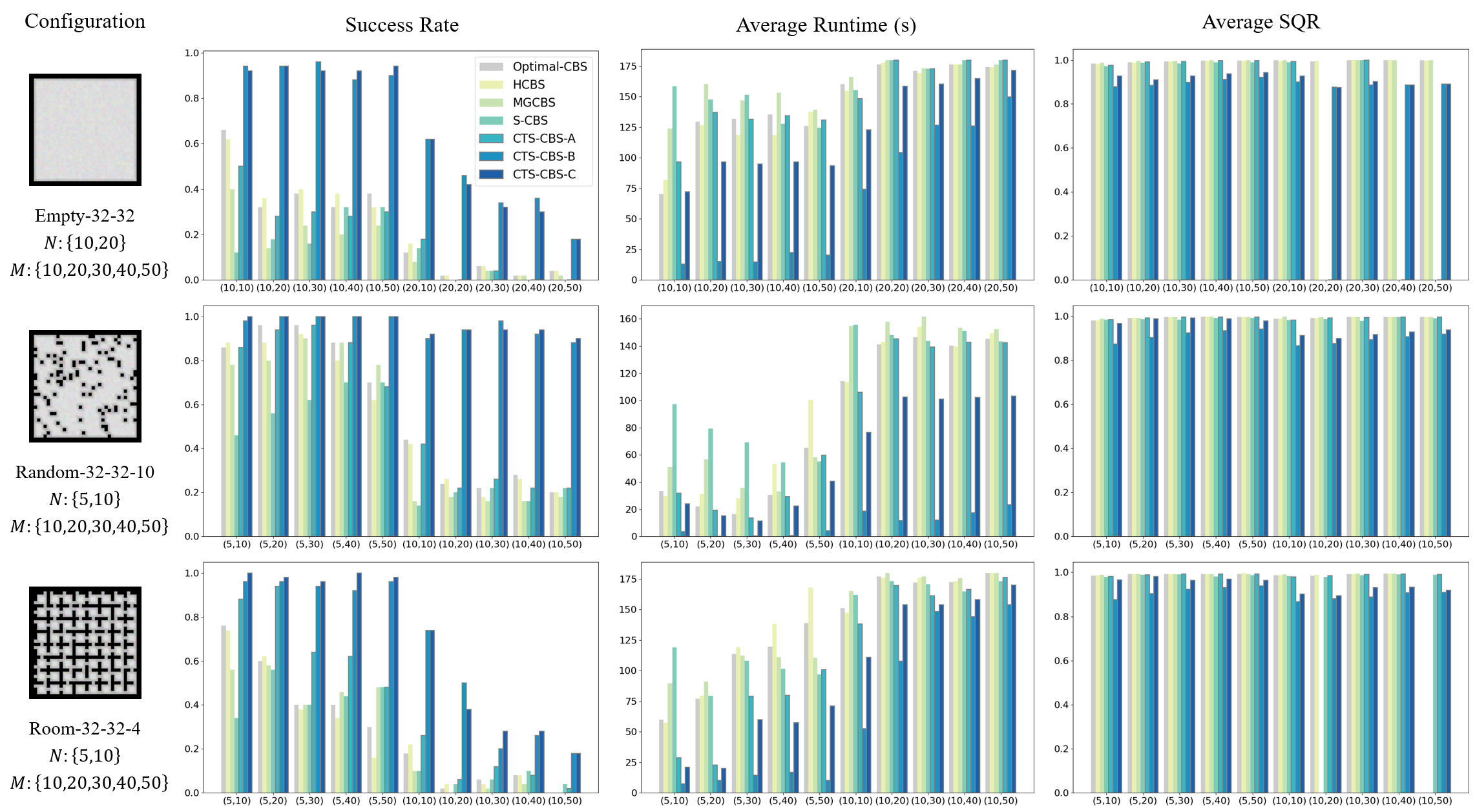}
 \end{center}
 \vspace{-3mm}
 \caption{The performance of the CTS-CBS adaptations and baseline algorithms on MG-MAPF problems under varying difficulty (different maps, agent numbers, and task numbers). The CTS-CBS adaptations significantly outperform all baseline algorithms in terms of success rate (by up to 20 times larger) and runtime (by up to 100 times faster), with less than a 10\% sacrifice in solution quality.}
 \label{fig:comparison}
 \vspace{-3mm}
\end{figure*}

\vspace{-3mm}
\subsection{Comparative Results and Analysis}\label{Sec:Experiments-B}
% 也放上去CTSCBS-（A）MG,也放上去CTSCBS-（B）CTS,也放上去CTSCBS-（C）combined，以及四种对比算法，都放上去。
In this section, we evaluate the performance of various algorithms on the \texttt{MG-MAPF} dataset. This is motivated by the fact that, while there are existing methods for the MG-MAPF problem, there is currently no dedicated research addressing the CTS-MAPF problem. However, our algorithm, CTS-CBS, is not only capable of solving the CTS-MAPF problem but can also be adapted to solve the MG-MAPF problem with minor modifications. Therefore, evaluating the performance of CTS-CBS against baselines on the \texttt{MG-MAPF} dataset can partially demonstrate the generalization of the CTS-CBS algorithm. 

There are several ways to adapt the CTS-CBS algorithm for the MG-MAPF problem. To comprehensively compare the algorithm's performance, we propose three adaptations:

\begin{enumerate}
    \item \textbf{CTS-CBS-A:} In the rTSP-solving phase, the terminal constraint is removed, allowing the agent to visit all tasks and the final destination in any order. CTS-CBS-A is capable of providing either the optimal or near-optimal solution for the MG-MAPF problem.
    
    \item \textbf{CTS-CBS-B:} The MG-MAPF problem is transformed back into CTS-MAPF by assigning one of the tasks as the destination for each agent, thereby enabling the use of the CTS-CBS algorithm. Although this reduces the solution space of the MG-MAPF problem which may affect the optimality of the solution, it could improve the success rate and efficiency of finding a feasible solution.
    
    \item \textbf{CTS-CBS-C:} This adaptation combines the above two approaches. The combination can be implemented in various ways; for example, one could use a sequential approach, where CTS-CBS-A is applied first, and if no solution is found within a certain time, CTS-CBS-B is employed for the remaining time. Alternatively, inspired by the anytime algorithm, CTS-CBS-B can be used initially to obtain a solution, followed by CTS-CBS-A to refine it towards optimality within the remaining time. In this paper, we use the first combination method, allocating two-thirds of the time limit to CTS-CBS-A.
\end{enumerate}

We compare the performance of various algorithms from three perspectives: success rate, efficiency (runtime), and solution quality. Below are the definitions of these three metrics:

\begin{enumerate}
    \item \textbf{Success Rate:} The proportion of successful instances in a given scenario. A higher success rate indicates better algorithm performance.
    
    \item \textbf{Runtime:} The average time taken to solve all instances in a given scenario. For failed instances, the runtime is considered to be the time limit. A smaller runtime indicates higher algorithm efficiency.
    
    \item \textbf{Solution Quality:} This metric reflects the gap between the solution provided by the algorithm and the optimal solution for the problem. Since the optimal solution cannot be obtained for all instances, we use the lower bound of the optimal solution as a proxy. The cost of the first root node  $P_{\text{root1}}$  in CTS-CBS-A, denoted as $\text{cost}_{\text{lb}}$, serves as the lower bound of the optimal solution. For all successful instances with a solution cost $\text{cost}_{\text{sol}}$, we define the Solution Quality Ratio (SQR) as $\text{SQR} = \text{cost}_{\text{lb}} / \text{cost}_{\text{sol}}$. The average SQR across all successful instances indicates the solution quality in a given scenario, with higher average SQR for better optimality of the algorithm.
\end{enumerate}

In this section, we set the parameter $\omega$ of CTS-CBS to 0.01, with results shown in Fig. \ref{fig:comparison}. Below, we introduce the performance of three adaptations:

\subsubsection{CTS-CBS-A}
CTS-CBS-A can generate near-optimal solutions (with a cost no greater than $(1 + \omega)$ times the optimal solution). Therefore, compared with baselines aiming to generate optimal solutions, the average SQR of CTS-CBS-A is nearly identical. Another noticeable trend is that as the map difficulty increases (from \texttt{Empty} to \texttt{Random} to \texttt{Room}), the performance advantage of CTS-CBS-A over the baselines becomes more pronounced. Specifically, on the \texttt{Room} map, CTS-CBS-A achieves an obviously shorter runtime and higher success rate than the baselines, while maintaining similar solution quality.

\subsubsection{CTS-CBS-B}
CTS-CBS-B reduces the solution space by transforming the MG-MAPF problem into CTS-MAPF, sacrificing optimality. However, the solution efficiency of CTS-CBS-B is significantly improved. In both simple and difficult scenarios, CTS-CBS-B achieves a success rate several times higher than that of the baselines, with the runtime reduced by one to two orders of magnitude in some cases. Moreover, the results show that the average SQR of CTS-CBS-B is above 0.85, which generally indicates that the solution quality remains acceptable. One possible explanation for why reducing the solution space does not significantly affect solution quality and success rate is that solutions to the MG-MAPF problem are not sparse, so the probability of finding a feasible solution within a subspace remains high.

\subsubsection{CTS-CBS-C}
CTS-CBS-C is a combination of the previous two adaptations, so its performance is expected to lie between them. The results show that in most cases, the success rate of CTS-CBS-C is similar to that of CTS-CBS-B, significantly higher than the baselines. In terms of runtime, CTS-CBS-C falls between CTS-CBS-A and CTS-CBS-B but is still significantly lower than the baselines. As for solution quality, the harmonization of CTS-CBS-C is most evident. As the problem difficulty increases (i.e., as the agent number and task number grow), the average SQR of CTS-CBS-C gradually transitions from that of CTS-CBS-A to that of CTS-CBS-B. This is because, in simpler scenarios, the solution of CTS-CBS-C is mostly generated by CTS-CBS-A. However, in more challenging scenarios, CTS-CBS-A fails to find a solution within the time limit, and thus the solutions of CTS-CBS-C are predominantly generated by CTS-CBS-B.

In summary, the CTS-CBS adaptations significantly outperform all baseline algorithms in terms of success rate (by up to 20 times larger) and runtime (by up to 100 times faster), with less than a 10\% sacrifice in solution quality.

\vspace{-3mm}
\subsection{Influence of Sub-optimality Bound} \label{Sec:Experiments-C}
Next, we aim to investigate the influence of the sub-optimality bound parameter $\omega$. Using the \texttt{Room} map as an example and setting $N=5$, we explore three values for $\omega$: 0, 0.01, and 0.1. Additionally, we include the case where $\omega$ is set to infinity (i.e., the forth baseline) in our analysis. In addition to the success rate, runtime, and solution quality proposed in the previous subsection, we introduce two additional metrics for comparison: the number of generated root nodes and the number of TSP solver calls. These metrics provide further insight into the effect of $\omega$. The results are presented in Fig. \ref{fig:omega-impact}, with each metric explained in detail below.

\begin{figure}[htbp]
  \centering
  \includegraphics[width=1\linewidth]{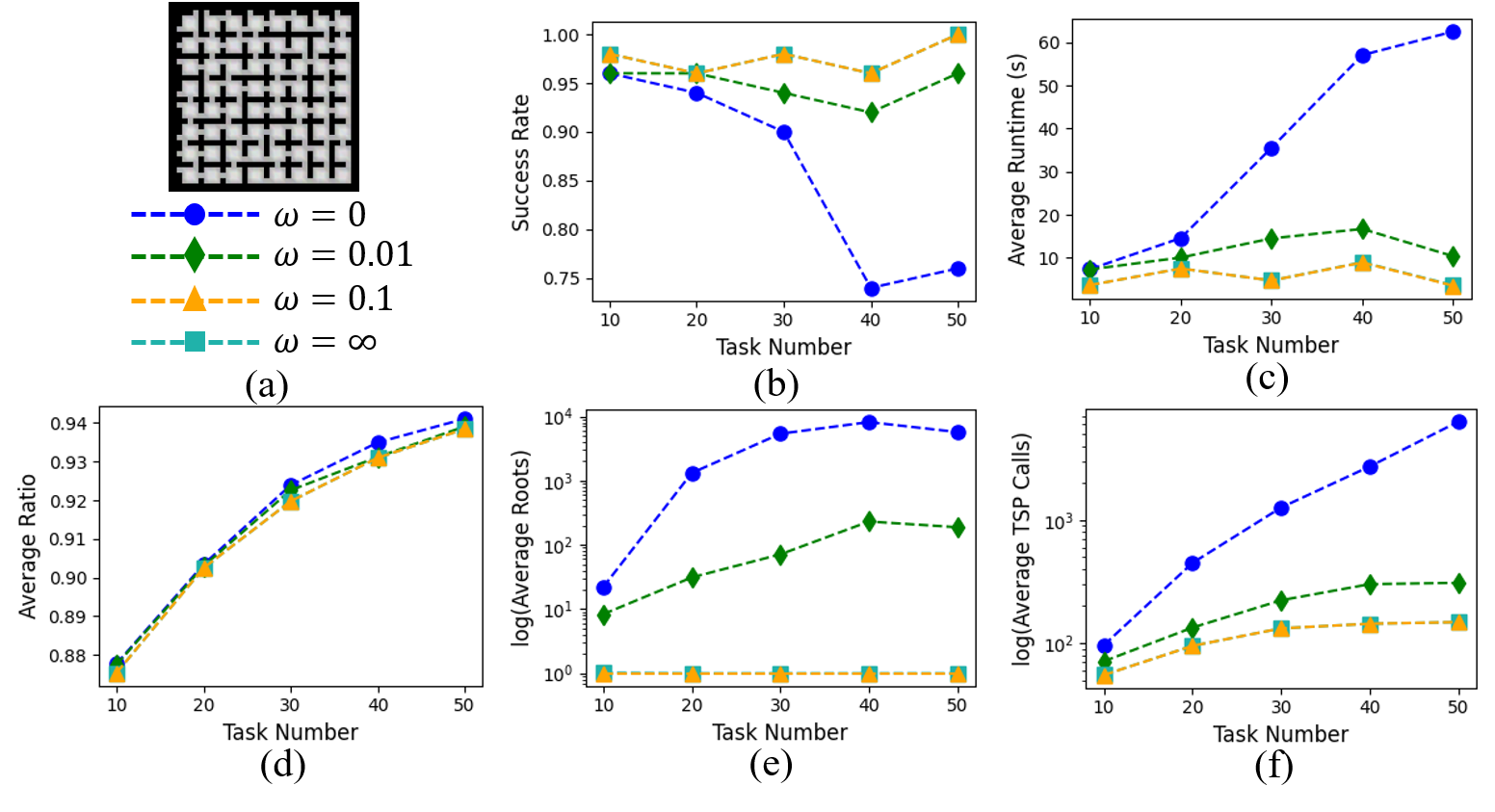} 
    \vspace{-3mm}
  \caption{The influence of the sub-optimality bound on the CTS-CBS algorithm. A clear trade-off emerges between solution efficiency (success rate (b) and runtime (c)) and solution quality (solution quality ratio (d)). This is because, as $\omega$ increases, CTS-CBS tends to focus on finding feasible solutions within existing $CT$s, delaying the generation of new root nodes (e), which in turn reduces the number of TSP solver calls (f).}
  \label{fig:omega-impact}
  % \vspace{-3mm}
\end{figure}

\subsubsection{Success Rate}
As shown in Fig. \ref{fig:omega-impact}(b), increasing $\omega$ from 0 to 0.01 and from 0.01 to 0.1 both improve the success rate. This is because a feasible solution may be found after exploring multiple layers of nodes within each $CT$. When $\omega = 0$, CTS-CBS tends to generate a new root node after exploring only a few layers in a $CT$, moving on to explore the next $CT$. Since solving the TSP problem is more time-consuming than resolving conflicts within a single $CT$, it may result in a failure to find a feasible solution within the time limit. Therefore, when $\omega$ is increased to 0.01, it delays the generation of new root nodes, encouraging CTS-CBS to explore deeper layers within the current $CT$s, thus increasing the probability of finding a feasible solution. The trend further strengthens when $\omega$ is raised to 0.1, further improving the success rate. The results for $\omega = \infty$ are identical to those for $\omega = 0.1$, indicating that when $\omega = 0.1$, CTS-CBS has essentially degenerated into a sequential method and the algorithm only solves within the first $CT$.

\subsubsection{Runtime}
In CTS-CBS, once a feasible solution is found, it is immediately returned. As previously mentioned, increasing the value of $\omega$ enhances the probability of finding a feasible solution, thereby reducing the average runtime of the algorithm (Fig. \ref{fig:omega-impact}(c)). Considering both the success rate and runtime, it can be concluded that increasing $\omega$ improves the overall efficiency of the CTS-CBS algorithm.

\subsubsection{Solution Quality}
Fig. \ref{fig:omega-impact}(d) illustrates the variation in the average SQR under different values of $\omega$. It can be observed that as $\omega$ increases, the average SQR slightly decreases. This occurs because when $\omega$ is 0, if the CTS-CBS algorithm successfully finds a solution, it is guaranteed to be the optimal one (notably, the optimal solution typically does not equal the lower bound cost, $\text{cost}_{\text{lb}}$, so the solution quality ratio is also not 1 when $\omega$ is 0). As $\omega$ increases, the sub-optimality bound of the algorithm widens, leading to a decrease in solution quality.

\subsubsection{Number of Generated Root Nodes and TSP Solver Calls}
In summary, $\omega$ influences the trade-off between solution efficiency and quality. The reason for this balance can be seen from the number of root nodes generated (Fig. \ref{fig:omega-impact}(e)) and the number of TSP solver calls (Fig. \ref{fig:omega-impact}(f)) under different $\omega$ values. As $\omega$ increases, the number of root nodes generated decreases significantly, leading to a notable reduction in the number of TSP solver calls. When $\omega$ is set to 0.1, only one root node is generated, meaning CTS-CBS searches for a feasible solution within a single $CT$, thus improving solution efficiency at the expense of optimality.

\subsection{Practical Robot Tests}
In this study, we utilized a team of small physical robots to validate the CTS-CBS algorithm. The experimental platform employed is the toio robots (https://toio.io/). The toio robots communicate with the control center via Bluetooth and are guided to move on a dedicated mat (comprising a grid of 7x5) through the continuous issuance of coordinates.

During the experiment, we first established the experimental scenario by placing several obstacles on the dedicated mat and defining the starting and ending positions of each robot, as well as the locations of various tasks and their respective robot assignments. The configurations in the simulation environment were set to mirror those in the real-world environment.

Subsequently, the CTS-CBS algorithm was executed at the control center (in this case, a laptop) to compute the paths for each robot, which were then communicated to the robots. Fig. \ref{fig:real-robot} illustrates a set of snapshots of the demo. The robots were able to navigate according to their designated paths. The code for the experimental section was written in Python. The physical robot tests verified the applicability of the CTS-CBS algorithm.

\begin{figure*}[htpb]
 \begin{center}    
      \includegraphics[width=0.95\linewidth]{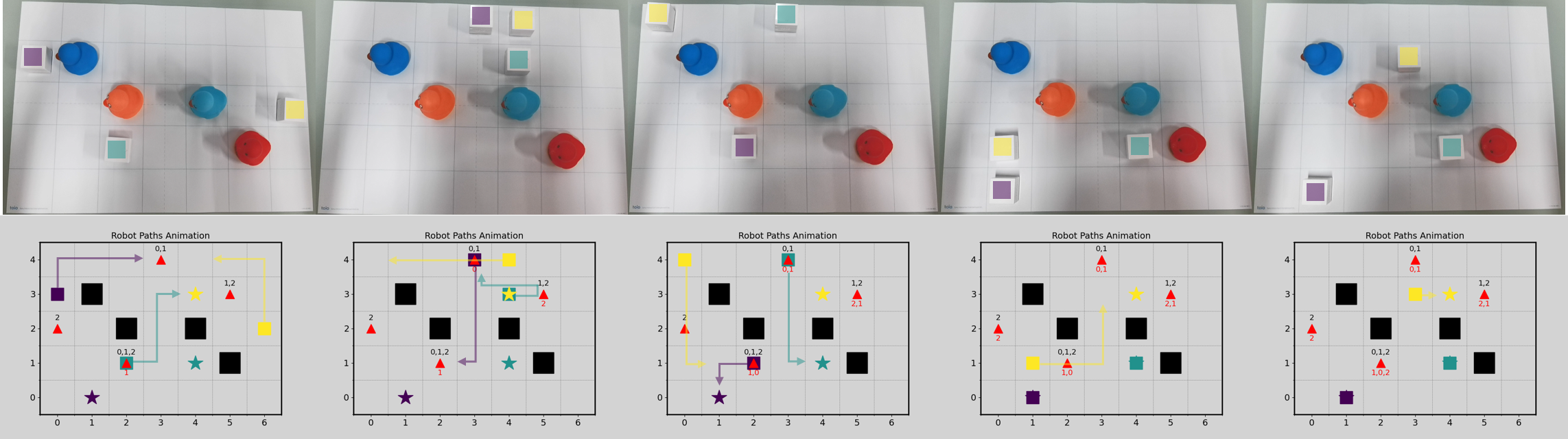}
 \end{center}

 \caption{The snapshots of the physical robot test demo. The purple, yellow, and green squares represent the robots, with corresponding colored stars indicating their respective destinations. The red triangles denote tasks, and the numbers above indicate which robots are assigned to each task. During the experiment, the red numbers below each task triangle represent the robots that have already reached that task.}
 \label{fig:real-robot}
 \vspace{-3mm}
\end{figure*}

\section{Conclusion} \label{Sec:Conc} 
This paper formulates a generalization problem of MAPF, called Collaborative Task Sequencing - Multi-Agent Pathfinding (CTS-MAPF), which requires agents to plan their task sequences while simultaneously planning collision-free paths. To address this problem, we develop a new approach called Collaborative Task Sequencing - Conflict-Based Search (CTS-CBS) and theoretically prove the completeness and optimality of the algorithm (or sub-optimality if a bounded parameter is given). We also create two datasets, \texttt{CTS-MAPF} and \texttt{MG-MAPF}, and conduct extensive experiments on them. The results indicate that the adaptations of CTS-CBS for MG-MAPF problem significantly outperform all baselines in terms of success rate (by up to 20 times larger) and runtime (by up to 100 times faster), with less than a 10\% sacrifice in solution quality. Additionally, CTS-CBS demonstrates greater flexibility; by varying the sub-optimality bound $\omega$, it can balance the trade-off between solution efficiency and quality. Finally, practical robot tests are conducted to verify the applicability of the algorithm in real-world robotic scenarios.

In future work, we plan to explore additional strategies to further enhance the efficiency of CTS-CBS, considering the kinematic constraints of agents to broaden the application scope, as well as extending our algorithm to dynamic and lifelong scenarios.

\bibliographystyle{IEEEtran}
\bibliography{bibfile/mybibfile}

% \section*{Biography Section}
\vspace{-30pt}
\begin{IEEEbiography}[{\includegraphics[width=1in,height=1.25in,clip,keepaspectratio]{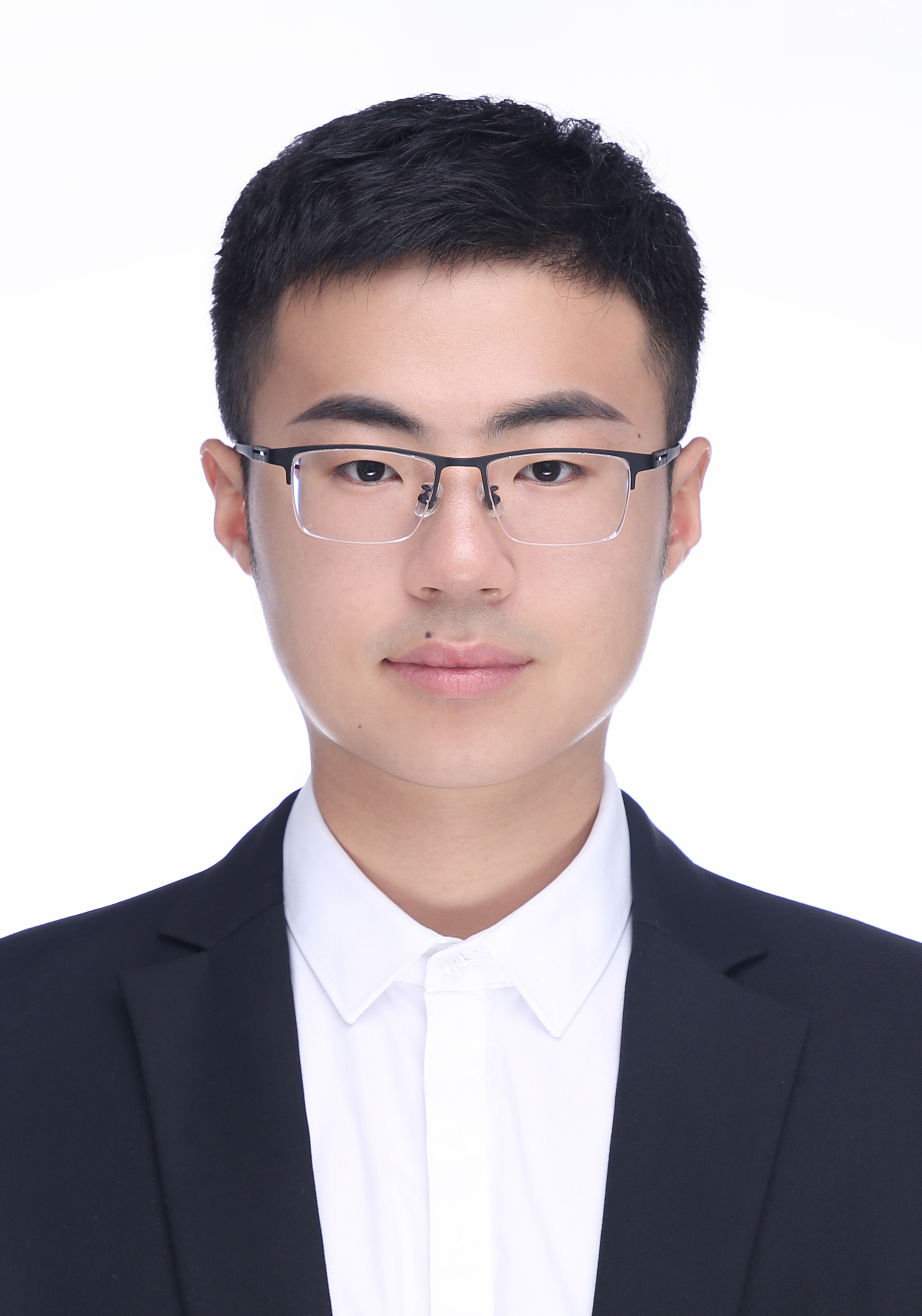}}]{Junkai~Jiang}
   received the B.E. degree from Tsinghua University, Beijing, China, in 2021, where he is currently pursuing the Ph.D. degree in mechanical engineering with the School of Vehicle and Mobility, Tsinghua University. His research interests include risk assessment, trajectory prediction, motion planning of intelligent vehicles, and multi-agent coordinate planning.
\end{IEEEbiography}

\vspace{-30pt}
\begin{IEEEbiography}[{\includegraphics[width=1in,height=1.25in,clip,keepaspectratio]{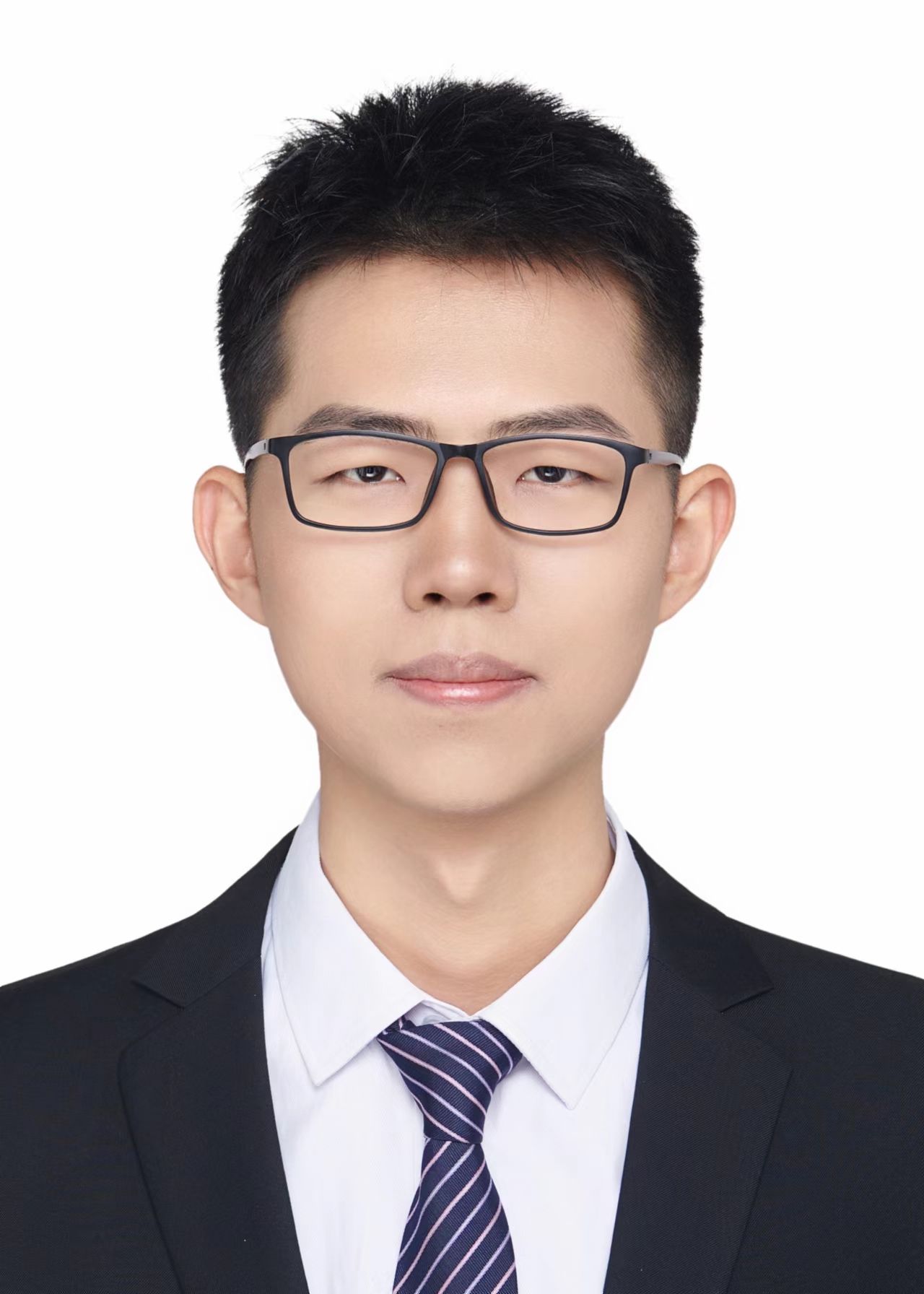}}]{Ruochen~Li}
   received the bachelor's degree in automotive engineering from School of Vehicle and Mobility, Tsinghua University, Beijing, China, in 2023. He is currently pursuing the Ph.D. degree in mechanical engineering with School of Vehicle and Mobility, Tsinghua University, Beijing, China. His research centered on decision-making of multiple intelligent and connected vehicles
\end{IEEEbiography}

\vspace{-30pt}
\begin{IEEEbiography}[{\includegraphics[width=1in,height=1.25in,clip,keepaspectratio]{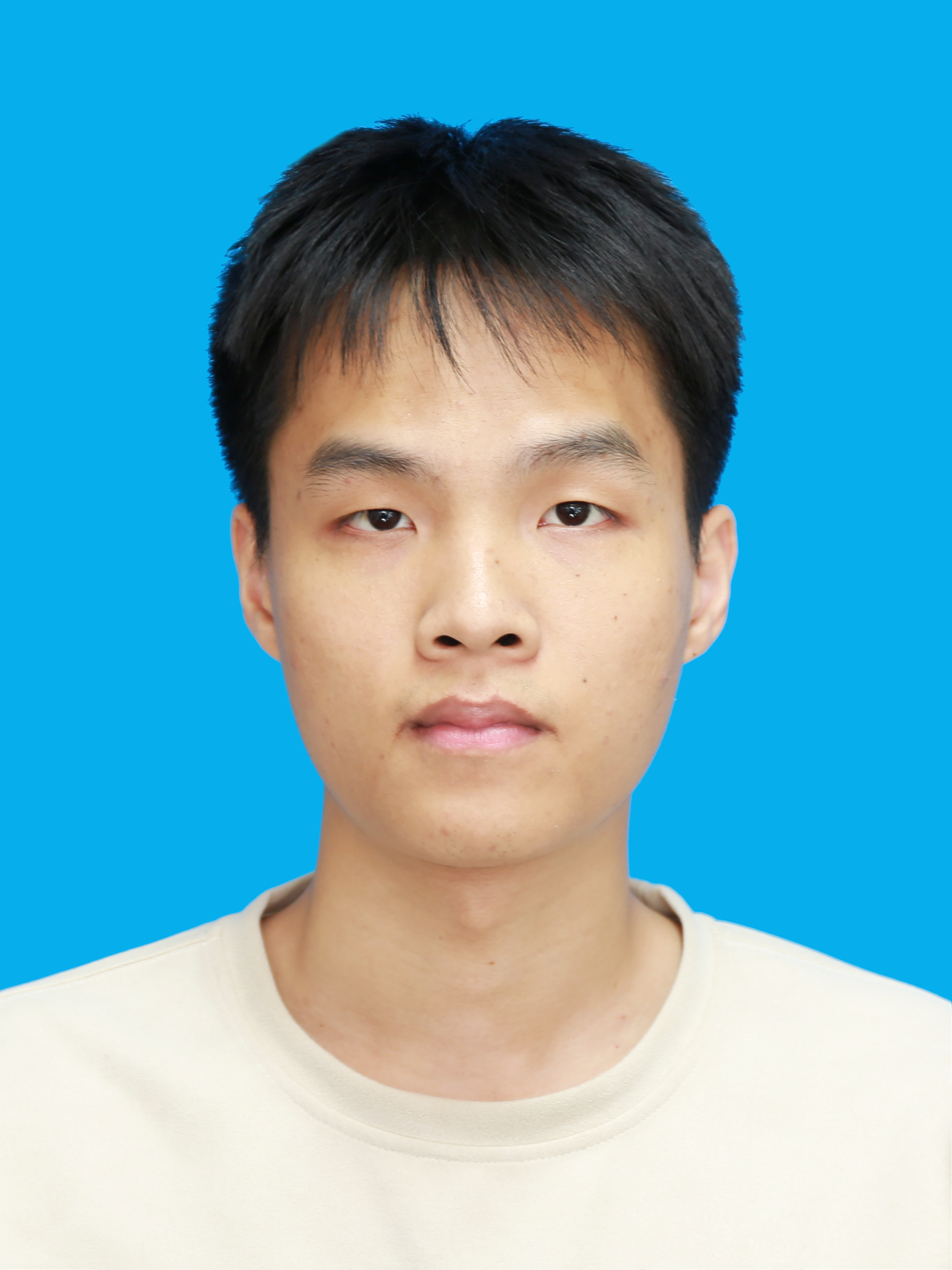}}]{Yibin~Yang}
   received the B.S. degree in  mechanical engineering from Tsinghua University, Beijing, China, 2019. He is currently working toward the Ph.D. degree in mechanical engineering with School of Vehicle and Mobility, Tsinghua, Beijing, China. His research interests include the multi-agent path finding, multi-vehicle trajectory planning, trajectory prediction and planning for autonomous driving.
\end{IEEEbiography}

\vspace{-30pt}
\begin{IEEEbiography}[{\includegraphics[width=1in,height=1.25in,clip,keepaspectratio]{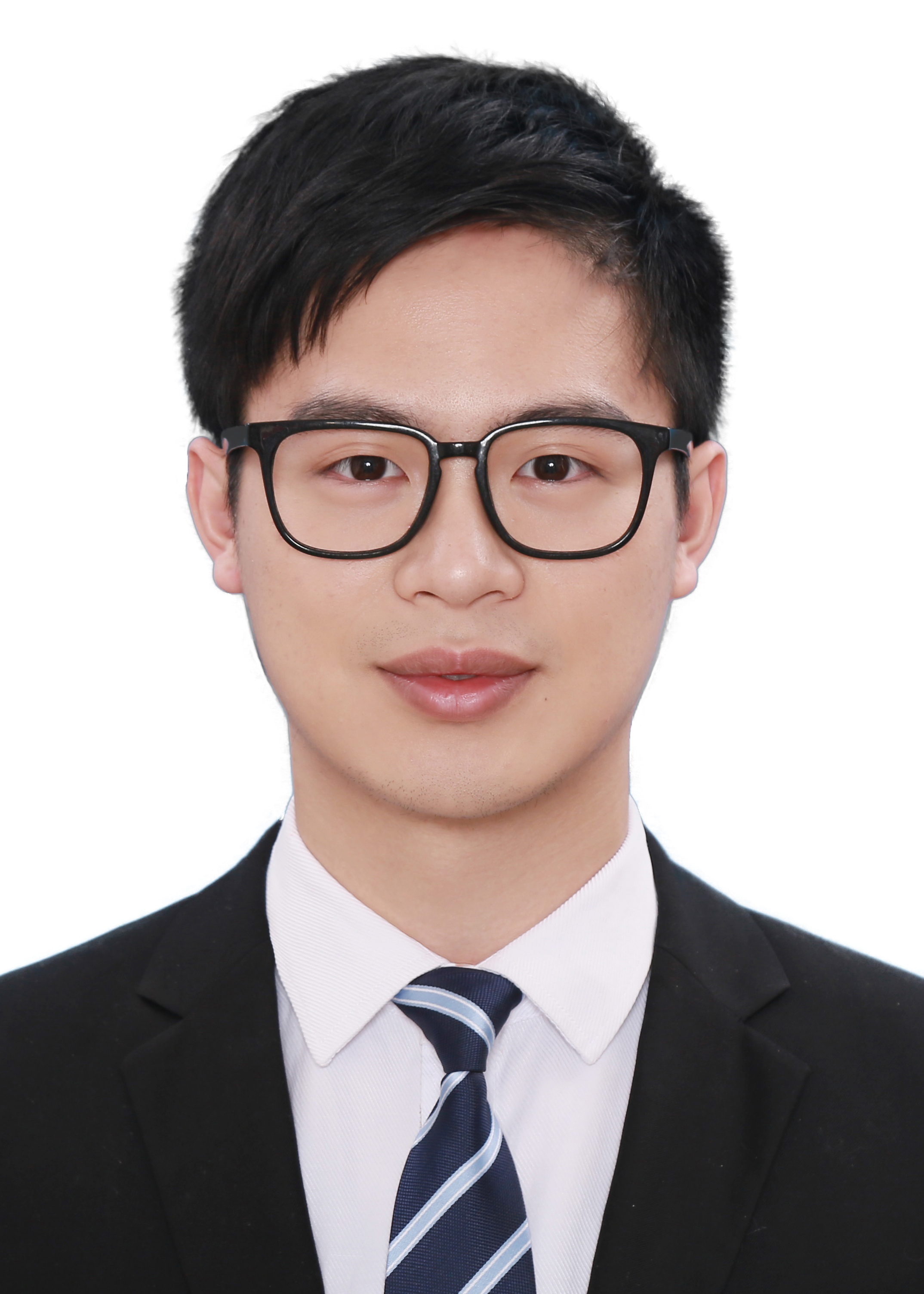}}]{Yihe~Chen}
   received the B.E. degree from Tsinghua University, Beijing, China, in 2021. He is currently pursuing the Ph.D. degree in mechanical engineering with the School of Vehicle and Mobility, Tsinghua University. His research interests include autonomous intersection management, multi-intersections coordination, traffic signal phasing and timing optimization, and motion planning of intelligent vehicles.
\end{IEEEbiography}

\vspace{-30pt}
\begin{IEEEbiography}[{\includegraphics[width=1in,height=1.25in,clip,keepaspectratio]{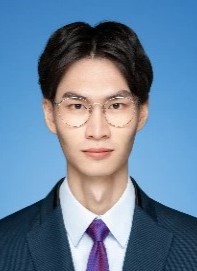}}]{Yuning~Wang}
   received the bachelor’s degree in automotive engineering from School of Vehicle and Mobility, Tsinghua University, Beijing, China, in 2020. He is currently pursuing the Ph.D. degree in mechanical engineering with School of Vehicle and Mobility, Tsinghua University, Beijing, China. His research centered on scene understanding, decision-making and planning, and driving evaluation of intelligent vehicles.
\end{IEEEbiography}

\vspace{-30pt}
\begin{IEEEbiography}[{\includegraphics[width=1in,height=1.25in,clip,keepaspectratio]{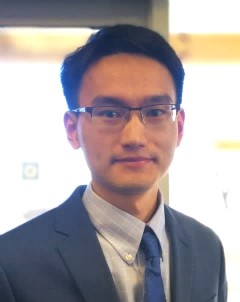}}]{Shaobing~Xu}
   received his Ph.D. degree in Mechanical Engineering from Tsinghua University, Beijing, China, in 2016. He is currently an assistant professor with the School of Vehicle and Mobility at Tsinghua University, Beijing, China. He was an assistant research scientist and postdoctoral researcher with the Department of Mechanical Engineering and Mcity at the University of Michigan, Ann Arbor. His research focuses on vehicle motion control, decision making, and path planning for autonomous vehicles. He was a recipient of the outstanding Ph.D. dissertation award of Tsinghua University and the Best Paper Award of AVEC’2018.
\end{IEEEbiography}

\vspace{-30pt}
\begin{IEEEbiography}[{\includegraphics[width=1in,height=1.25in,clip,keepaspectratio]{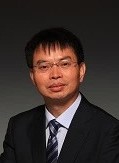}}]{Jianqiang~Wang}
   received the B.Tech. and M.S. degrees from Jilin University of Technology, Changchun, China, in 1994 and 1997, respectively, and the Ph.D. degree from Jilin University, Changchun, in 2002. He is currently a Professor and the Dean of the School of Vehicle and Mobility, Tsinghua University, Beijing, China. 

He has authored over 150 papers and is a co-inventor of over 140 patent applications. He was involved in over 10 sponsored projects. His active research interests include intelligent vehicles, driving assistance systems, and driver behavior. He was a recipient of the Best Paper Award in the 2014 IEEE Intelligent Vehicle Symposium, the Best Paper Award in the 14th ITS Asia Pacific Forum, the Best Paper Award in the 2017 IEEE Intelligent Vehicle Symposium, the Changjiang Scholar Program Professor in 2017, the Distinguished Young Scientists of NSF China in 2016, and the New Century Excellent Talents in 2008.
\end{IEEEbiography}

\end{document}